\documentclass{article}

% Recommended, but optional, packages for figures and better typesetting:
\usepackage{microtype}
\usepackage{graphicx}
\usepackage{subfigure}
\usepackage{booktabs} % for professional tables

% hyperref makes hyperlinks in the resulting PDF.
% If your build breaks (sometimes temporarily if a hyperlink spans a page)
% please comment out the following usepackage line and replace
% \usepackage{icml2019} with \usepackage[nohyperref]{icml2019} above.
\usepackage{hyperref}

\usepackage[colorinlistoftodos]{todonotes}
\usepackage{amsfonts,amsmath,amsthm,amssymb}
\usepackage{bm}
\usepackage{multirow}
\usepackage{color}
\usepackage{subfigure}
\usepackage{wrapfig}

\newtheorem{theorem}{Theorem}

\newtheorem{lemma}{Lemma}

\DeclareMathOperator*{\argmax}{arg\,max}

\newcommand{\algo}{{\small \sc\textsf{SNO-MDP}}}
\newcommand{\es}{{\small \sc\textsf{ES}}}
\newcommand{\pes}{{\small \sc\textsf{P-ES}}}
\newcommand{\gpsafetygym}{{\small \sc\textsf{GP-Safety-Gym}}}

% Attempt to make hyperref and algorithmic work together better:

% Use the following line for the initial blind version submitted for review:
% \usepackage{icml2020}

\usepackage{xcolor}

% If accepted, instead use the following line for the camera-ready submission:
\usepackage[accepted]{icml2020}

% The \icmltitle you define below is probably too long as a header.
% Therefore, a short form for the running title is supplied here:
\icmltitlerunning{Safe Reinforcement Learning in Constrained Markov Decision Processes}

\begin{document}

\twocolumn[
\icmltitle{Safe Reinforcement Learning in Constrained Markov Decision Processes}

% It is OKAY to include author information, even for blind
% submissions: the style file will automatically remove it for you
% unless you've provided the [accepted] option to the icml2019
% package.

% List of affiliations: The first argument should be a (short)
% identifier you will use later to specify author affiliations
% Academic affiliations should list Department, University, City, Region, Country
% Industry affiliations should list Company, City, Region, Country

% You can specify symbols, otherwise they are numbered in order.
% Ideally, you should not use this facility. Affiliations will be numbered
% in order of appearance and this is the preferred way.
%\icmlsetsymbol{equal}{*}

\begin{icmlauthorlist}
\icmlauthor{Akifumi Wachi}{ibm}
\icmlauthor{Yanan Sui}{tsinghua}
\end{icmlauthorlist}

\icmlaffiliation{ibm}{IBM Research AI, Tokyo, Japan}
\icmlaffiliation{tsinghua}{Tsinghua University, Beijing, China}

\icmlcorrespondingauthor{Akifumi Wachi}{akifumi.wachi@ibm.com}

\icmlcorrespondingauthor{Yanan Sui}{ysui@tsinghua.edu.cn}

% You may provide any keywords that you
% find helpful for describing your paper; these are used to populate
% the "keywords" metadata in the PDF but will not be shown in the document
\icmlkeywords{Machine Learning, ICML}

\vskip 0.3in
]

% this must go after the closing bracket ] following \twocolumn[ ...

% This command actually creates the footnote in the first column
% listing the affiliations and the copyright notice.
% The command takes one argument, which is text to display at the start of the footnote.
% The \icmlEqualContribution command is standard text for equal contribution.
% Remove it (just {}) if you do not need this facility.

\printAffiliationsAndNotice{}  % leave blank if no need to mention equal contribution
%\printAffiliationsAndNotice{\icmlEqualContribution} % otherwise use the standard text.

\begin{abstract}
Safe reinforcement learning has been a promising approach for optimizing the policy of an agent that operates in safety-critical applications.
In this paper, we propose an algorithm, \algo, that explores and optimizes Markov decision processes under unknown safety constraints.
Specifically, we take a stepwise approach for optimizing safety and cumulative reward. 
In our method, the agent first learns safety constraints by expanding the safe region, and then optimizes the cumulative reward in the certified safe region.
We provide theoretical guarantees on both the satisfaction of the safety constraint and the near-optimality of the cumulative reward under proper regularity assumptions.
In our experiments, we demonstrate the effectiveness of \algo\ through two experiments: one uses a synthetic data in a new, openly-available environment named \gpsafetygym, and the other simulates Mars surface exploration by using real observation data.
% \ys{Provide a footnote link to \gpsafetygym ~on the 1st or 2nd page.} --> Done. Thank you.
\end{abstract}

\section{Introduction}

In many real applications, environmental hazards are first detected \textit{in situ}.
For example, a planetary rover exploring Mars does not obtain high-resolution images at the time of its launch.
In usual cases, after landing on Mars, the rover takes close-up images or observes terrain data.
Leveraging the acquired data, ground operators identify whether each position is safe.
Hence, for fully automated operation, an agent must autonomously \textit{explore} the environment and \textit{guarantee} safety.

In most cases, however, guaranteeing safety (i.e., surviving) is \textit{not} the primary objective.
The optimal policy for ensuring safety is often extremely conservative (e.g., stay at the current position).
Even though avoiding hazards is an essential requirement, the primary objective is nonetheless to obtain rewards (e.g., scientific gain).

As a framework to solve this problem, safe reinforcement learning (safe RL, \citet{garcia2015comprehensive}) has recently been noticed by the research community.
The objective of safe RL is to maximize the cumulative reward while guaranteeing or encouraging safety.
Especially in problem settings in which the reward and safety functions are \textit{unknown a priori}, however, a great deal of previous work (e.g., \citet{wachi2018safe}) theoretically guarantees the satisfaction of the safety constraint, but the acquired policy is not necessarily near-optimal in terms of the cumulative reward.
In this paper, we propose a safe RL algorithm that guarantees a near-optimal cumulative reward while guaranteeing the satisfaction of the safety constraint as well.

\paragraph{Related work.}

Conventional reinforcement learning literature has been agnostic with respect to safety, while pursuing efficiency and optimality of the cumulative reward.
Representatives of such work are \textit{probably approximately correct Markov decision process (PAC-MDP)} algorithms \citep{brafman2002r,kearns2002near,strehl2006pac}.
Algorithms with the PAC-MDP property enable an agent to learn a near-optimal behavior with a polynomial number of samples.
In addition, \citet{kolter2009near} and \citet{araya2012near} proposed algorithms to obtain an $\epsilon$-close solution to the Bayesian optimal policy.

As the research community tries to apply RL algorithms to real-world systems, however, safety issues have been highlighted.
RL algorithms inherently require an agent to explore unknown state-action pairs, and algorithms that are agnostic with respect to safety may execute unsafe actions without deliberateness.
Hence, it is important to develop algorithms that guarantee safety even during training, at least with high probability.

A notable approach is \textit{safe model-based RL} \citep{berkenkamp2017safe,fisac2018general}.
In this domain, safety is associated with a state constraint; thus, the resulting algorithm is well suited for such contexts as a drone learning how to hover.
The parameters of a drone are not perfectly known a priori, but we have prior knowledge on what states are unsafe (e.g., a pitch angle of more than 50 degrees is unsafe).
In the field of control theory, constrained model predictive control \cite{mayne2000constrained} has been popular. 
For example, \citet{aswani2013provably} proposed an algorithm for guaranteeing robust feasibility and constraint satisfaction for a learned model using constrained model predictive control.

On the other hand, safe \textit{model-free} RL has also been successful, especially in continuous control tasks.
For example, \citet{achiam2017constrained} proposed the constrained policy optimization (CPO) algorithm while guaranteeing safety in terms of constraint satisfaction.
Moreover, \citet{chow2019lyapunov} leveraged Lyapunov functions to learn policies with high expected cumulative reward, while guaranteeing the satisfaction of safety constraints. 

Finally, several previous studies have addressed how to explore a safe space in an environment that is unknown a priori \citep{sui2015safe,turchetta2016safe}.
This type of problem setting is well-suited for cases such as a robot exploring an uncertain environment (e.g., a planetary surface, a disaster site).
In particular, under the safety constraint, \citet{sui2018stagewise} proposed a stepwise algorithm for finding the maximum value of the reward function in a state-less setting (i.e., the bandit problem), while \citet{wachi2018safe} proposed an algorithm for maximizing the cumulative reward in an MDP setting (i.e., the planning problem). 

\paragraph{Our contributions.}

We propose a safe near-optimal MDP, \algo\ algorithm, for achieving a near-optimal cumulative reward while guaranteeing safety in a constrained MDP.
This algorithm first explores the safety function and then optimizes the cumulative reward in the certified safe region.
We further propose an algorithm called Early Stopping of Exploration of Safety (\es$^2$) to achieve faster convergence while maintaining probabilistic guarantees with respect to both safety and reward.
We examine \algo\ by applying PAC-MDP analysis and prove that, with high probability, the acquired policy is near-optimal with respect to the cumulative reward while guaranteeing safety.
We build an openly-available test-bed called \gpsafetygym\ for synthetic experiments.\footnote{\url{https://github.com/akifumi-wachi-4/safe_near_optimal_mdp}}
The safety and efficiency of \algo\ are then evaluated with two experiments: one in the \gpsafetygym\ synthetic environment, and the other using real Mars terrain data.

\section{Problem Statement}

A safety constrained MDP is defined as a tuple
\[
\mathcal{M} = \langle \mathcal{S}, \mathcal{A}, f, r, g, \gamma \rangle,
\]
where $\mathcal{S}$ is a finite set of states $\{\bm{s}\}$, $\mathcal{A}$ is a finite set of actions $\{a\}$, $f: \mathcal{S} \times \mathcal{A} \rightarrow \mathcal{S}$ is a deterministic state transition function, $r: \mathcal{S} \rightarrow (0, R_{\max}]$ is a bounded reward function, $g: \mathcal{S} \rightarrow \mathbb{R}$ is a safety function, and $\gamma \in \mathbb{R}$ is a discount factor.
We assume that both the reward function $r$ and the safety function $g$ are \textit{not known a priori}. At every time step $t \in \mathbb{N}$, the agent must be in a ``safe'' state.
More concretely, for a state $\bm{s}_t$, the safety function value $g(\bm{s}_t)$ must be above a threshold $h \in \mathbb{R}$; that is, the safety constraint is represented as $g(\bm{s}_t) \ge h$.

A policy $\pi: \mathcal{S} \rightarrow \mathcal{A}$ maps a state to an action.
The value of a policy is evaluated according to the discounted cumulative reward under the safety constraint.
Let $V_{\mathcal{M}}$ denote the value function in the MDP, $\mathcal{M}$.
In summary, we represent our problem as follows:
\begin{alignat*}{2}
\quad
\text{maximize:} &\quad V_{\mathcal{M}}^{\pi}(\bm{s}_t) = \mathbb{E}\left[\ \sum_{\tau=0}^\infty \gamma^{\tau} r(\bm{s}_{t+\tau})\ \middle|\ \bm{s}_t \right] \\
\text{subject to:} &\quad g(\bm{s}_{t+\tau}) \ge h, \quad \forall \tau = [0, \infty].
\end{alignat*}

\paragraph{Difficulties.}

In conventional safety-constrained RL algorithms, the safety function is assumed to be known a priori.
The key difference lies in the fact that we need to explore a safety function that is unknown a priori while guaranteeing satisfaction of the safety constraint.

However, it is intractable to solve the above problem without further assumptions.
First of all, without prior information on the state-and-action pairs known to be safe, an agent cannot take any viable action at the very beginning.
Second, if the safety function does not exhibit any regularity, then the agent cannot infer the safety of decisions.

\paragraph{Assumptions.}

To overcome the difficulties mentioned above, we adopt two assumptions from \citet{sui2015safe} and \citet{turchetta2016safe}.
For the first difficulty, we simply assume that the agent starts in an initial set of states, $S_0$, that is known a priori to be safe.
Second, we assume regularity for the safety function.
Formally speaking, we assume that the state space $\mathcal{S}$ is endowed with a positive definite kernel function, $k^g$, and that the safety function has a bounded norm in the associated reproducing kernel Hilbert space (RKHS, \citet{scholkopf2001learning}).
The kernel function, $k^g$ is employed to capture the regularity of the safety function.
Finally, we further assume that the safety function $g$ is $L$-Lipschitz continuous with respect to some distance metric $d(\cdot, \cdot)$ on $\mathcal{S}$.

As with the safety function, we also assume that the reward function has a bounded norm in the associated RKHS, and that its regularity is captured by another positive definite kernel function, $k^r$.

The above assumptions allow us to characterize the reward and safety functions by using Gaussian processes (GPs, see \citet{rasmussengaussian}).
By using the GP models, the values of $r$ and $g$ at unobserved states are predicted according to previously observed functions' values.
An advantage of leveraging GPs is that we can obtain both optimistic and pessimistic measurements of the two functions by using the inferred means and variances.
A GP is specified by its mean, $\mu(\bm{s})$, and covariance, $k(\bm{s}, \bm{s}')$. The reward and safety functions are thus modeled as
\begin{alignat*}{2}
r(\bm{s}) &= \mathcal{GP}(\mu^r(\bm{s}), k^r(\bm{s}, \bm{s}')), \\
g(\bm{s}) &= \mathcal{GP}(\mu^g(\bm{s}), k^g(\bm{s}, \bm{s}')).
\end{alignat*}
Without loss of generality, let $\mu(\bm{s}) = 0$ for all $\bm{s} \in \mathcal{S}$.
For the reward and safety functions, we respectively model the observation noise as $y^r = r(\bm{s}) + n^r$ and $y^g = g(\bm{s}) + n^g$, where $n^r \sim \mathcal{N}(0, \sigma^2_r)$ and $n^g \sim \mathcal{N}(0, \sigma^2_g)$.
The posteriors over $r$ and $g$ are computed on the basis of $t$ observations at states $\{\bm{s}_1, \ldots, \bm{s}_t\}$.
Then, for both the reward and safety functions, the posterior mean, variance, and covariance are respectively represented as
\begin{alignat*}{3}
\bm{\mu}_t(\bm{s}) &=  \bm{k}_t^\top(\bm{s})(\bm{K}_t+\sigma^2 \bm{I})^{-1}\bm{y}_t, \\
\bm{\sigma}_t(\bm{s}) &= \bm{k}_t(\bm{s},\bm{s}), \\ \bm{k}_t(\bm{s},\bm{s}') &= \bm{k}(\bm{s},\bm{s}') - \bm{k}_t^\top(\bm{s})(\bm{K}_t+\sigma^2 \bm{I})^{-1}\bm{k}_t(\bm{s}'),
\end{alignat*}
where $\bm{k}_t(\bm{s}) = [k(\bm{s}_1,\bm{s}), \ldots, k(\bm{s}_t, \bm{s})]^\top$, and $\bm{K}_t$ is the positive definite kernel matrix.

\section{Background}

We define two kinds of predicted safe spaces inferred by a GP as in \citet{turchetta19goose}.
First, we consider a \textit{pessimistic} safe space, which contains states identified as safe with a greater probability than a pre-defined confidence level.
Second, we derive an \textit{optimistic} safe space that includes all states that may be safe with even a small probability.

\paragraph{Predicted pessimistic safe space.} 
We use the notion of a safe space in \citet{turchetta2016safe} as a \textit{predicted pessimistic safe space}.
For the probabilistic safety guarantee, two sets are defined.
The first set, $S^-_t$, simply contains the states that satisfy the safety constraint with high probability.
The second one, $\mathcal{X}^-_t$, additionally considers the ability to reach states in $S^-_t$ (i.e., reachability) and the ability to return to the previously identified safe set, $\mathcal{X}^-_{t-1}$ (i.e., returnability).
The algorithm probabilistically guarantees safety by allowing the agent to visit only states in $\mathcal{X}^-_t$.

Safety is evaluated in terms of the confidence interval inferred by the GP, which is represented as 
\[
Q_t(\bm{s}) := [\mu^g_{t-1}(\bm{s}) \pm \beta_t^{1/2} \sigma^g_{t-1}(\bm{s})],
\]
where $\beta_t \in \mathbb{R}$ is a scaling factor for the required level of safety.
We consider the intersection of $Q_t$ up to iteration $t$, which is defined as $C_t(\bm{s}) = Q_t(\bm{s}) \cap C_{t-1}(\bm{s})$, where $C_0(\bm{s}) = [h, \infty]$ for all $\bm{s} \in S_0$.
The lower and upper bounds on $C_t(\bm{s})$ are denoted by $l_t(\bm{s}):=\min C_t(\bm{s})$ and $u_t(\bm{s}):=\max C_t(\bm{s})$, respectively.

The first set $S^-_t$ contains states such that the safety constraint is satisfied with high probability. 
It is formulated using the lower bound of the safety function, $l$ and the Lipshitz constant, $L$, as follows:
\begin{equation*}
S^-_t = \{ \bm{s} \in \mathcal{S} \mid \exists \bm{s}' \in \mathcal{X}^-_{t-1}:
l_t(\bm{s}') - L \cdot d(\bm{s}, \bm{s}') \ge h \}.
\end{equation*}

Next, the reachable and returnable sets are considered.
Even though a state is in $S^-_t$, it might be surrounded by unsafe states.
Given a set $X$, the states reachable from $X$ in one step are given by
$R_{\text{reach}}(X) = X \cup \{\bm{s} \in \mathcal{S} \mid \exists \bm{s}' \in X, a \in \mathcal{A}: \bm{s}=f(\bm{s}', a)\}$.
Even after arriving at a state with reachability, the agent may \textit{not} be able to move to another state because of a lack of safe actions.
Hence, before moving to a state $\bm{s}$, we consider whether or not there is at least one viable path from $\bm{s}$.
The set of states from which the agent can return to a set $\bar{X}$ through another set of states $X$ in one step is given by $R_{\text{ret}}(X, \bar{X}) = \bar{X} \cup \{\bm{s} \in X \mid \exists a \in\mathcal{A}: f(\bm{s}, a) \in \bar{X} \}$. 
Thus, an $n$-step returnability operator is given by $R^n_\text{ret}(X, \bar{X}) = R_\text{ret}(X, R^{n-1}_\text{ret} (X, \bar{X})), \text{with } R^1_\text{ret}(X, \bar{X}) = R_\text{ret}(X, \bar{X})$.
Finally, the set containing all the states that can reach $\bar{X}$ along an arbitrary long path in $X$ is defined as $\bar{R}_{\text{ret}}(X,\bar{X}) = \lim_{n \rightarrow \infty} R^n_\text{ret}(X, \bar{X})$.

Finally, the desired \textit{pessimistic} safe space, $\mathcal{X}^-_t$ is a subset of $S^-_t$ and also satisfies the reachability and returnability constraints; that is,
\begin{equation*}
\mathcal{X}^-_t = \{ \bm{s} \in S^-_t \mid \bm{s} \in R_\text{reach}(\mathcal{X}^-_{t-1}) \cap \bar{R}_{\text{ret}}(S^-_t, \mathcal{X}^-_{t-1}) \}.
\end{equation*}

\paragraph{Predicted optimistic safe space.} 

As defined in \citet{wachi2018safe} and \citet{turchetta19goose}, an \textit{optimistic} safe space has rich information for inferring the safety function.
Let $\mathcal{X}^+_t$ denote the predicted optimistic safe space.
Similarly to $\mathcal{X}^-_t$, the \textit{optimistic} safe space, $\mathcal{X}^+_t$, is defined as
\begin{equation*}
\mathcal{X}^+_t = \{ \bm{s} \in S^+_t \mid \bm{s} \in R_\text{reach}(\mathcal{X}^+_{t-1}) \cap \bar{R}_{\text{ret}}(S^+_t, \mathcal{X}^+_{t-1}) \},
\end{equation*}
where $S^+_t$ is the set of states that may satisfy the safety constraint, which is written as
\[
S^+_t = \{ \bm{s} \in \mathcal{S} \mid \exists \bm{s}' \in \mathcal{X}^+_{t-1}:
u_t(\bm{s}') - L \cdot d(\bm{s}, \bm{s}') \ge h \}.
\]
Intuitively, $\mathcal{X}^+_t$ contains all states that may turn out to be safe even if the probability is low. 
In other words, $\mathcal{S} \setminus \mathcal{X}_t^+$ contains states that are unsafe with high probability.

\paragraph{Confidence interval.}

The correctness of $\mathcal{X}_t^+$ and $\mathcal{X}_t^-$ depends on the accuracy of the confidence interval inferred by the GP.
The \textit{conservativeness} can be tuned by using the parameter $\beta$, and the choice of this parameter was well-studied in \citet{srinivas2009gaussian} and \citet{chowdhury2017kernelized}. In the rest of this paper, we set the parameter to
\begin{alignat*}{2}
\beta_t = B^g + \sigma_g \sqrt{2(\Gamma^g_{t-1} + 1 + \log(1/\Delta^g))},
\end{alignat*}
where $B^g$ is a bound on the RKHS norm of $g$, $\Delta^g$ is the allowed failure probability, and the observation noise is $\sigma_g$-sub-Gaussian.
Also, $\Gamma^g$ quantifies the effective degrees of freedom associated with the kernel function, which represents the maximal mutual information that can be obtained about the GP prior.

Under the above definitions and assumptions, we have the following lemma regarding the correctness of the confidence intervals.

\begin{lemma}
\label{lemma:safety_confidence}
Assume that $\|g\|_k^2 \le B^g$ and $n^g_t \le \sigma_g$, $\forall t \ge 1$. If $\beta_t = B^g + \sigma_g \sqrt{2(\Gamma^g_{t-1} + 1 + \log(1/\Delta^g))}$, then
\[
| g(\bm{s}) - \mu_{t-1}^g(\bm{s}) | \le \beta^{1/2}_t \sigma_{t-1}^g(\bm{s})
\]
holds for all $t \ge 1$ with probability at least $1-\Delta^g$.
\end{lemma}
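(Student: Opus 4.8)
The statement is the kernelized self-normalized concentration bound of \citet{chowdhury2017kernelized}, specialized to the safety GP, so the plan is to reproduce that argument. First I would pass to the feature-space view of the kernel $k^g$: let $\phi(\cdot)$ be the canonical feature map with $k^g(\bm{s},\bm{s}')=\langle\phi(\bm{s}),\phi(\bm{s}')\rangle$ and $g(\bm{s})=\langle g,\phi(\bm{s})\rangle$, stack the feature vectors of the queried states $\bm{s}_1,\dots,\bm{s}_{t-1}$ into an operator $\Phi_{t-1}$, and write $V_{t-1}=\Phi_{t-1}^\top\Phi_{t-1}+\sigma_g^2 I$. Using $\bm{y}_{t-1}=\Phi_{t-1}g+\bm{n}_{t-1}$, the posterior mean satisfies
\begin{equation*}
g(\bm{s})-\mu^g_{t-1}(\bm{s}) = \sigma_g^2\,\phi(\bm{s})^\top V_{t-1}^{-1} g \;-\; \phi(\bm{s})^\top V_{t-1}^{-1}\Phi_{t-1}^\top \bm{n}_{t-1},
\end{equation*}
and a short identity gives $\sigma^g_{t-1}(\bm{s})^2 = \sigma_g^2\,\phi(\bm{s})^\top V_{t-1}^{-1}\phi(\bm{s})$, so both terms above can be measured in the $V_{t-1}^{-1}$-norm and reweighted by $\sigma^g_{t-1}(\bm{s})$.

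For the first (bias) term, I would apply Cauchy--Schwarz in the $V_{t-1}^{-1}$ geometry together with the stated RKHS-norm bound $B^g$ on $g$, which yields a contribution of order $B^g\,\sigma^g_{t-1}(\bm{s})$. For the second (noise) term, the sequence $\bm{n}_{t-1}$ is a martingale difference sequence that is $\sigma_g$-sub-Gaussian (a bounded noise $n^g_t\le\sigma_g$ is in particular sub-Gaussian with this parameter), so I would invoke the self-normalized tail inequality for vector-valued martingales (the Abbasi-Yadkori--P\'al--Szepesv\'ari bound, in its RKHS-lifted form). This controls $\|\Phi_{t-1}^\top\bm{n}_{t-1}\|_{V_{t-1}^{-1}}$, with probability at least $1-\Delta^g$, by $\sigma_g\sqrt{2\log(\det(I+\sigma_g^{-2}K_{t-1})^{1/2}/\Delta^g)}$ up to the correct constants, uniformly over all $t\ge 1$.

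Combining the two contributions and one more Cauchy--Schwarz gives $|g(\bm{s})-\mu^g_{t-1}(\bm{s})| \le \big(B^g + \sigma_g\sqrt{2(\tfrac12\log\det(I+\sigma_g^{-2}K_{t-1})+1+\log(1/\Delta^g))}\big)\,\sigma^g_{t-1}(\bm{s})$; the final step is to upper bound $\tfrac12\log\det(I+\sigma_g^{-2}K_{t-1})$ by the maximal information gain $\Gamma^g_{t-1}$, which is its definition, producing exactly $\beta_t$ as stated. The bound is uniform in $\bm{s}$ because $\bm{s}$ enters only through the deterministic vector $\phi(\bm{s})$, while the single high-probability event concerns the noise sequence alone. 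The main obstacle is the noise step: making the self-normalized concentration rigorous when the feature space is infinite-dimensional requires either a finite-dimensional truncation/approximation argument or the ``method of mixtures'' carried out directly in the RKHS, and it is this step that accounts for the extra additive $1$ and the $\log(1/\Delta^g)$ inside $\beta_t$.
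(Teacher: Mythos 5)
Your plan is exactly the route the paper takes: the paper proves this lemma by directly citing Theorem~2 of \citet{chowdhury2017kernelized}, and your sketch is a faithful reconstruction of that theorem's proof (feature-space decomposition into bias and self-normalized noise terms, the Abbasi-Yadkori-style martingale bound, and the log-determinant-to-information-gain step). The only wrinkle is that your derivation yields the multiplier $B^g + \sigma_g\sqrt{2(\Gamma^g_{t-1}+1+\log(1/\Delta^g))}$, i.e.\ $\beta_t$ itself rather than $\beta_t^{1/2}$ as written in the lemma --- but that exponent mismatch is a notational inconsistency in the paper's statement (inherited from mixing the conventions of \citet{srinivas2009gaussian} and \citet{chowdhury2017kernelized}), not a gap in your argument.
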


The above paradigm for the safety function can also be applied to the reward function.
Hence, we have a similar lemma for the reward function as well.

\begin{lemma}
\label{lemma:reward_confidence}
Assume that $\|r\|_k^2 \le B^r$ and $n^r_t \le \sigma_r$, $\forall t \ge 1$. If $\alpha_t = B^r + \sigma_r \sqrt{2(\Gamma^r_{t-1} + 1 + \log(1/\Delta^r))}$, then
\[
| r(\bm{s}) - \mu_{t-1}^r(\bm{s}) | \le \alpha^{1/2}_t \sigma_{t-1}^r(\bm{s})
\]
holds for all $t \ge 1$ with probability at least $1-\Delta^r$.
\end{lemma}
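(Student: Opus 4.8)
The plan is to observe that Lemma~\ref{lemma:reward_confidence} is the exact analogue of Lemma~\ref{lemma:safety_confidence} with the roles of the safety function and the reward function interchanged, and therefore to reuse verbatim the argument establishing Lemma~\ref{lemma:safety_confidence}. Concretely, the statement is an instance of the standard GP / RKHS self-normalized concentration bound (as in \citet{srinivas2009gaussian} and \citet{chowdhury2017kernelized}), specialized to the function $r$, the kernel $k^r$, the noise level $\sigma_r$, the RKHS-norm bound $B^r$, the information-gain quantity $\Gamma^r$, and the failure budget $\Delta^r$. Since $r$ is modeled by its own Gaussian process with its own independent observation noise, no interaction with the safety GP needs to be accounted for, so the bound can be derived in isolation.

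First I would recall the two hypotheses that feed the argument: $\|r\|_{k}^2 \le B^r$, which controls the ``bias'' term coming from the prior, and the assumption that each noise realization $n^r_t$ is $\sigma_r$-sub-Gaussian, which controls the ``variance'' term. Next I would invoke the self-normalized martingale inequality for RKHS-valued regression: with probability at least $1 - \Delta^r$, simultaneously for all $t \ge 1$ and all $\bm{s} \in \mathcal{S}$,
\[
| r(\bm{s}) - \mu_{t-1}^r(\bm{s}) | \;\le\; \Big( B^r + \sigma_r \sqrt{2\big(\Gamma^r_{t-1} + 1 + \log(1/\Delta^r)\big)} \Big)\, \sigma_{t-1}^r(\bm{s}).
\]
The third step is simply to recognize the parenthesized factor as exactly $\alpha_t^{1/2}$ by the definition of $\alpha_t$ stated in the lemma, which yields $| r(\bm{s}) - \mu_{t-1}^r(\bm{s}) | \le \alpha_t^{1/2}\,\sigma_{t-1}^r(\bm{s})$ and closes the proof. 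The key technical point that makes the $\log\det$ / information-gain bookkeeping go through is that $\Gamma^r$ upper-bounds the maximal mutual information about the GP prior, hence bounds $\tfrac12 \log\det(\bm{I} + \sigma_r^{-2} \bm{K}^r_{t-1})$, which is precisely the quantity appearing in the self-normalized tail bound.

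The part that deserves the most care is the uniformity over all $t \ge 1$: this is \emph{not} obtained by a union bound over time steps but by the stopping-time / supermartingale construction underlying the self-normalized inequality, exactly as in the proof of Lemma~\ref{lemma:safety_confidence}. Since that machinery is already in place there and the substitution $g \mapsto r$, $B^g \mapsto B^r$, $\sigma_g \mapsto \sigma_r$, $\Gamma^g \mapsto \Gamma^r$, $\Delta^g \mapsto \Delta^r$, $\beta_t \mapsto \alpha_t$ changes nothing structurally, I expect no genuine obstacle; the lemma is effectively a corollary of Lemma~\ref{lemma:safety_confidence} applied to the reward GP.
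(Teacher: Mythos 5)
Your proposal matches the paper's own proof, which simply states that both confidence-interval lemmas follow from Theorem~2 of \citet{chowdhury2017kernelized}; your additional detail on the self-normalized martingale bound, the role of $\Gamma^r$ in controlling the $\log\det$ term, and the uniformity over $t$ is exactly the content of that cited theorem applied to the reward GP. One small caveat: with $\alpha_t$ defined as $B^r + \sigma_r\sqrt{2(\Gamma^r_{t-1}+1+\log(1/\Delta^r))}$, the Chowdhury--Gopalan bound gives a multiplier of $\alpha_t$, not $\alpha_t^{1/2}$, so the identification of the parenthesized factor with $\alpha_t^{1/2}$ is a notational inconsistency inherited from the paper's statement (a convention carried over from \citet{srinivas2009gaussian}) rather than a gap in your argument.
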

These lemmas follow from Theorem~2 in \citet{chowdhury2017kernelized}.

\paragraph{Optimal solution.}
Here, we define the optimal policy in our problem setting.
Under the optimal policy, $\pi^*$, the value function, $V_{\mathcal{M}}$, satisfies the following Bellman equation:
\begin{alignat*}{2}
V_{\mathcal{M}}^*(\bm{s}_t) = \max_{\bm{s}_{t+1} \in \bar{R}_{\epsilon_g}(S_0)} \left[\ r(\bm{s}_{t+1}) + \gamma  V_{\mathcal{M}}^*(\bm{s}_{t+1}) \ \right],
\end{alignat*}
where $\bar{R}_{\epsilon_g}(S_0)$ is the largest set that can be safely learned up to $\epsilon_g$ accuracy (for a formal definition, see Appendix A or \citet{turchetta2016safe}).
In our problem setting, in which the reward and safety functions are unknown a priori, the above Bellman equation cannot be solved directly.
Our ultimate objective is to obtain a policy whose value is close to $\mathcal{V}^*_\mathcal{M}$ while guaranteeing satisfaction of the safety constraint.

\section{Algorithm}

We now introduce our proposed algorithm, \algo, for achieving a near-optimal policy with respect to the cumulative reward while guaranteeing safety.

We first give an overview of \algo, which is outlined as Algorithm~\ref{algorithm1}.
We extend a stepwise approach in \citet{sui2018stagewise} from state-less to stateful settings.
Basically, our algorithm consists of two steps.
In the first step, the agent expands the pessimistic safe region while guaranteeing safety (lines 2$-$17).
Next, it explores and exploits the reward in the safe region certified in the first step (lines 18$-$23).
The reason for this stepwise approach is that we can neglect uncertainty related to the a priori unknown safety function once the safe region is fixed.

However, a pure stepwise approach does not stop exploring the safe region until the convergence of the GP confidence interval (lines 15$-$16).
This formulation often requires the agent to execute a great number of actions for exploring safety.
Hence, to achieve near-optimality while executing a smaller number of actions, we also propose the \es$^2$ algorithm.\footnote{Both \es$^2$ and \pes$^2$ do \textit{not} affect the agent's safety.}
This algorithm checks whether the current safe region is sufficient for achieving near-optimality (lines 12$-$14), which maintains the theoretical guarantee with respect to both the satisfaction of the safety constraint and the near-optimality of the cumulative reward.
We further propose a practical \es$^2$ algorithm, called \pes$^2$, with better empirical performance, although it does not provide a theoretical guarantee in terms of the near-optimality of the cumulative reward.

\begin{algorithm}[t]
\caption{\textbf{\space \algo\ with \es$^2$}}
\begin{small}
\label{algorithm1}
\textbf{Input}: states $\mathcal{S}$, actions $\mathcal{A}$, transition function $f$, kernel $k^r$ for reward, kernel $k^g$ for safety, GP prior for reward, GP prior for safety, safety threshold $h$, discount factor $\gamma$, Lipschitz constant $L$, initial safe space $S_0$. \par
\begin{algorithmic}[1]
\STATE $C_0(\bm{s}) \leftarrow [h, \infty)$ for all $\bm{s} \in S_0$
\STATE // Exploration of safety
\LOOP
\STATE $S^-_t \leftarrow \{ \bm{s} \in \mathcal{S} \mid \exists \bm{s}' \in \mathcal{X}^-_{t-1}:
l_t(\bm{s}') - L \cdot d(\bm{s}, \bm{s}') \ge h \}$
\STATE $S^+_t \leftarrow \{ \bm{s} \in \mathcal{S} \mid \exists \bm{s}' \in \mathcal{X}^+_{t-1}:
u_t(\bm{s}') - L \cdot d(\bm{s}, \bm{s}') \ge h \}$
\STATE $\mathcal{X}^-_t \leftarrow \{ \bm{s} \in S^-_t \mid \bm{s} \in R_\text{reach}(\mathcal{X}^-_{t-1}) \cap \bar{R}_{\text{ret}}(S^-_t, \mathcal{X}^-_{t-1}) \}$
\STATE $\mathcal{X}^+_t \leftarrow \{ \bm{s} \in S^+_t \mid \bm{s} \in R_\text{reach}(\mathcal{X}^+_{t-1}) \cap \bar{R}_{\text{ret}}(S^+_t, \mathcal{X}^+_{t-1}) \}$
\STATE $G_t \leftarrow \{ s \in \mathcal{X}_t^- \mid e_t(s) > 0 \}$
\STATE $\xi \leftarrow \argmax_{\bm{s} \in G_t} w_t(\bm{s})$
\STATE Update GPs for both reward and safety on way to $\xi$
\STATE $t \leftarrow t + T_{\bm{s}_{t-1} \rightarrow \xi}$ and $\bm{s}_t \leftarrow \xi$
\STATE // \es$^2$ algorithm
\STATE $\mathcal{Y}_t \leftarrow \{\bm{s}' \in \mathcal{S}^+ \mid \forall \bm{s} \in \mathcal{X}^-_t: \bm{s}' = f(\bm{s}, \pi^*_y(a \mid \bm{s})) \}$
\STATE \textbf{if} $\mathcal{Y}_t \subseteq \mathcal{X}_t^-$ \textbf{then} \textbf{break}
\STATE // Typical termination condition
\STATE \textbf{if} $\max_{\bm{s} \in G_t} w_t(s) < \epsilon_g$ \textbf{then} \textbf{break}
\ENDLOOP
\STATE // Exploration and exploitation of reward
\LOOP
\STATE $U_t \leftarrow \mu^r_t + \alpha_{t+1} \cdot \sigma^r_t$
\STATE $J_{\mathcal{Y}}^*(\bm{s}_t) \leftarrow \max_{s_{t+1} \in \mathcal{Y}_t} \bigl[U_t(\bm{s}_{t+1}) + \gamma J_{\mathcal{Y}}^*(\bm{s}_{t+1}) \bigr]$
\STATE $\bm{s}_{t+1} \leftarrow \argmax_{s_{t+1} \in \mathcal{Y}_t} J_{\mathcal{Y}}^*(\bm{s}_t)$
\ENDLOOP
\end{algorithmic}
\end{small}
\end{algorithm}

\subsection{Exploration of Safety (Step 1)}

First, we consider how to explore the safety function.
As a scheme to expand the safe region, we consider ``expanders'' as in \citet{sui2015safe} and \citet{turchetta2016safe}.
Expanders are states that may expand the predicted safe region, which is defined as $G_t = \{ s \in \mathcal{X}_t^- \mid e_t(s) > 0 \}$, where $e_t(s) = |s' \in \mathcal{S} \setminus S_t^- \mid u_t(s) - Ld(s, s') \ge h|$.

The \textit{efficiency} of expanding the safe region is measured by the width of the safety function's confidence interval, defined as 
\[
w_t(\bm{s}) = u_t(s) - l_t(s).
\]
The agent safely and efficiently expands the safe region by sampling the state with the maximum value of $w$ among the expanders, $G_t$.
Hence, the agent sets the temporal goal according to
\[
\xi = \argmax_{\bm{s} \in G_t} w_t(\bm{s}).
\]
Then, within the predicted safe space $\mathcal{X}_t^-$, it chooses a path to get to $\xi$ from the current state $\bm{s}_{t-1}$ so as to minimize the cost (e.g., the path length).
In our experiment, we simply minimized the path length. 
By defining the cost as related to $w$ (e.g., $1/w$), however, the agent could explore safety more actively on the way to $\xi$.

The previous work \cite{sui2015safe,turchetta2016safe,sui2018stagewise} terminated safety exploration when the desired accuracy was achieved for every state in $G_t$; that is,
\begin{equation}
\label{eq:terminal_cnd_prev}
    \max_{s \in G_t} w_t(\bm{s}) \le \epsilon_g.
\end{equation}
Unfortunately, this termination condition often requires a great number of iterations.
For the purpose of maximizing the cumulative reward, it often leads to the loss of reward. 
Therefore, in Section~\ref{ES2}, we propose the \es$^2$ algorithm to improve this point.

\subsection{Exploration and Exploitation of Reward (Step 2)}

Once expansion of the safe region is completed, the agent guarantees safety as long as it is in $\mathcal{X}^-$ and does not have to expand the safe region anymore. 
Hence, all we have to do is optimize the cumulative reward in $\mathcal{X}^-$. As such, a simple approach is to follow the \textit{optimism in the face of uncertainty} principle as in \citet{strehl2008analysis} and \citet{auer2007logarithmic}, then to consider the ``exploration bonus'' represented by R-MAX \citep{brafman2002r} and Bayesian Exploration Bonus (BEB, \citet{kolter2009near}).

Specifically, in accordance with Lemma~\ref{lemma:reward_confidence}, we optimize the policy by \textit{optimistically} measuring the reward with the (probabilistic) upper confidence bound,
\[
U_t(\bm{s}) := \mu_t^r(\bm{s})  + \alpha^{1/2}_{t+1} \cdot \sigma_t^r(\bm{s}).
\]
In this reward setting, the second term on the right-hand side corresponds to the exploration bonus.
For balancing the exploration and exploitation in terms of reward, we solve the following Bellman equation: 
\begin{alignat*}{2}
\nonumber
J_{\mathcal{X}}^*(\bm{s}_t, \bm{b}^r_t, \bm{b}^g_t) = \!\!\!\max_{s_{t+1} \in \mathcal{X}_{t^*}^-} \bigl[ U_t(\bm{s}_{t+1}) + \gamma J_{\mathcal{X}}^*(\bm{s}_{t+1}, \bm{b}^r_t, \bm{b}^g_t) \bigr],
\end{alignat*}
where $\bm{b}^r=(\mu^r, \sigma^r)$ and $\bm{b}^g=(\mu^g, \sigma^g)$ are the beliefs over reward and safety, respectively.
Also, $t^*$ is the time step when the termination condition (\ref{eq:terminal_cnd_prev}) is satisfied.
Note that $\bm{b}^r$ and $\bm{b}^g$ are not updated; hence, we can solve the above equation with standard algorithms (e.g., value iteration).

\subsection{Early Stopping of Exploration of Safety (\es$^2$)}
\label{ES2}

\begin{figure}[t]
    \centering
    \includegraphics[width=75mm]{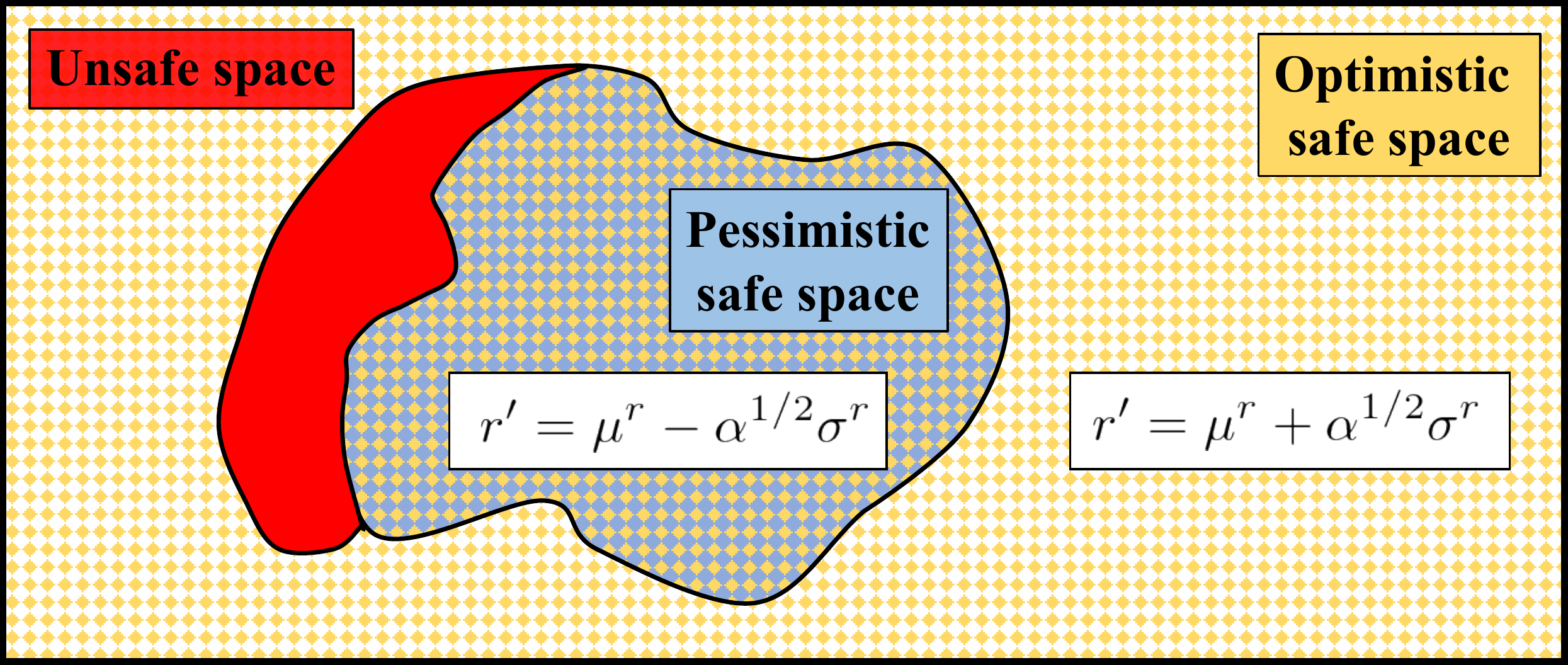}
    \caption{Illustration of $\mathcal{M}_y$, used in the \es$^2$ algorithm. The yellow and blue regions represent $\mathcal{X}_t^+$ and $\mathcal{X}_t^-$, respectively. The red region (i.e., $\mathcal{X} \setminus \mathcal{X}_t^+$) is unsafe with high probability.}
    \label{fig:es2}
\end{figure}

We have proposed a stepwise approach for exploring and optimizing the constrained MDP.
In the first step when the safe region is expanded, however, the existing safe exploration algorithms \citep{sui2015safe,turchetta2016safe,sui2018stagewise} continue exploring the state space until convergence of the confidence interval, $w$, which generally leads to a large number of iterations.
Our primary objective is to maximize the cumulative reward; hence, we should stop exploring safety if further exploration will not lead to maximizing the cumulative reward.

While exploring the safety function, we check whether the step can be migrated.
As such, we consider the following additional MDP,
\begin{alignat*}{2}
\mathcal{M}_y = \langle \mathcal{X}^+, \mathcal{A}, f, r', g, \gamma \rangle.
\end{alignat*}
As shown in Figure~\ref{fig:es2}, the differences from the original MDP, $\mathcal{M}$, lie in the state space and the reward function.
The state space of $\mathcal{M}_y$ is defined as the optimistic safe space (i.e., $\mathcal{X}^+$), while the reward function is defined as follows:
\begin{eqnarray}
r' :=
  \left\{\begin{array}{ll}
  \mu^r + \alpha^{1/2} \sigma^r & \quad \text{if}\ \  \bm{s} \in \mathcal{X}^+_t \setminus \mathcal{X}^-_t, \\
  \mu^r - \alpha^{1/2} \sigma^r & \quad \text{if}\ \  \bm{s} \in \mathcal{X}^-_t.
  \end{array}
  \right.
  \label{eqn:reward_new_MDP}
\end{eqnarray}
In the pessimistic safe space, the reward is defined as the lower bound; otherwise, it is defined as the upper bound.
This definition of the reward function encourages the agent to explore outside the predicted safe space, $\mathcal{X}_t^-$.
Using the new MDP above, we consider the set of states that the agent will visit at the next time step, defined as
\[
\mathcal{Y}_t = \{\bm{s}' \in \mathcal{S}^+ \mid \forall \bm{s} \in \mathcal{X}^-_t: \bm{s}' = f(\bm{s}, \pi_y^*(a \mid \bm{s})) \},
\]
where $\pi^*_y$ is the optimal policy for $\mathcal{M}_y$, obtained by maximizing the following value function:
\begin{equation}
\label{eq:V_es2}
V_{\mathcal{M}_y}(\bm{s}_t) = \max_{s_{t+1} \in \mathcal{X}_t^+} [\ r'(\bm{s}_{t+1}) + \gamma V_{\mathcal{M}_y}(\bm{s}_{t+1}) \ ].
\end{equation}
Finally, we stop exploring the safety function if the following equation holds:
\begin{equation}
\mathcal{Y}_t \subseteq \mathcal{X}^-_t.
\label{eq:stopping_cond}
\end{equation}
Intuitively, we stop expanding the safe space if the direction of the optimal policy for $\mathcal{M}_y$ heads for the inside of $\mathcal{X}_t^-$.
If the agent tries to stay in $\mathcal{X}_t^-$ even under the condition that the reward is defined as in (\ref{eqn:reward_new_MDP}), then we do not have to expand the safe region anymore.

When the \es$^2$ algorithm confirms satisfaction of the above condition, we move on to the next step and then optimize the cumulative reward in $\mathcal{Y}_t$; that is,
\begin{alignat*}{2}
\nonumber
J_{\mathcal{Y}}^*(\bm{s}_t, \bm{b}^r_t, \bm{b}^g_t) = \max_{s_{t+1} \in \mathcal{Y}_t} \bigl[U_t(\bm{s}_{t+1}) + \gamma J_{\mathcal{Y}}^*(\bm{s}_{t+1}, \bm{b}^r_t, \bm{b}^g_t) \bigr].
\label{eq:J_Y}
\end{alignat*}

\subsection{More Practical \es$^2$ Algorithm (\pes$^2$)}

As we will prove in Section~\ref{sec:theory}, the \es$^2$ algorithm provides us with a theoretical guarantee with respect to the cumulative reward.
Unfortunately, this guarantee is achieved at the expense of empirical performance.
The issue with the pure \es$^2$ algorithm lies in the state constraint in (\ref{eq:V_es2}); that is, the value function is calculated under the assumption that all the states in the \textit{optimistic} safe space, $\mathcal{X}^+$, will be identified as safe.
This assumption is necessary for the theoretical guarantee, but, in practice, it would be more reasonable to measure the probability of a state being identified as safe.
Because the safety function is inferred as a Gaussian distribution for each state, an example of such a probability is a complementary error function; that is, we define the following probability,
\begin{alignat*}{2}
p(\bm{s}, \bm{b}^g) = \Pr \left[ g(\bm{s}) \ge h \mid \bm{b}^g \right] 
\approx 1  - \frac{1}{2} \mathrm{erfc} \left(\frac{\mu^g(\bm{s}) - h}{\sqrt{2} \sigma^g(\bm{s})} \right).
\end{alignat*}

Here, we introduce a new \textit{virtual} state, $\bm{z}$.
Concretely, for $\bm{z}$, the reward and transition probability are defined as $r(\bm{z}) = 0$ and $P(\bm{z} \mid \bm{z}, a, \bm{b}^g) = 1$ for all $a$ and $\bm{b}^g$, respectively.
Hence, using $\bm{z}$, we define a virtual transition probability $P^z_{\bm{x}} = \Pr[\bm{x} \mid \bm{s}_t, a_t, \bm{b}_t^g]$ as follows:
\begin{eqnarray*}
P^z_{\bm{x}} :=
  \left\{\begin{array}{ll}
  p(\bm{s}_{t+1}, \bm{b}_t^g) & \ \text{if}\ \ \bm{x} = \bm{s}_{t+1}, \\
  1 - p(\bm{s}_{t+1}, \bm{b}_t^g) & \ \text{if}\ \ \bm{x} = \bm{z}.
  \end{array}
  \right.
  \label{eqn:virtual_transition}
\end{eqnarray*}

\begin{figure}[t]
\centering
\includegraphics[width=75mm]{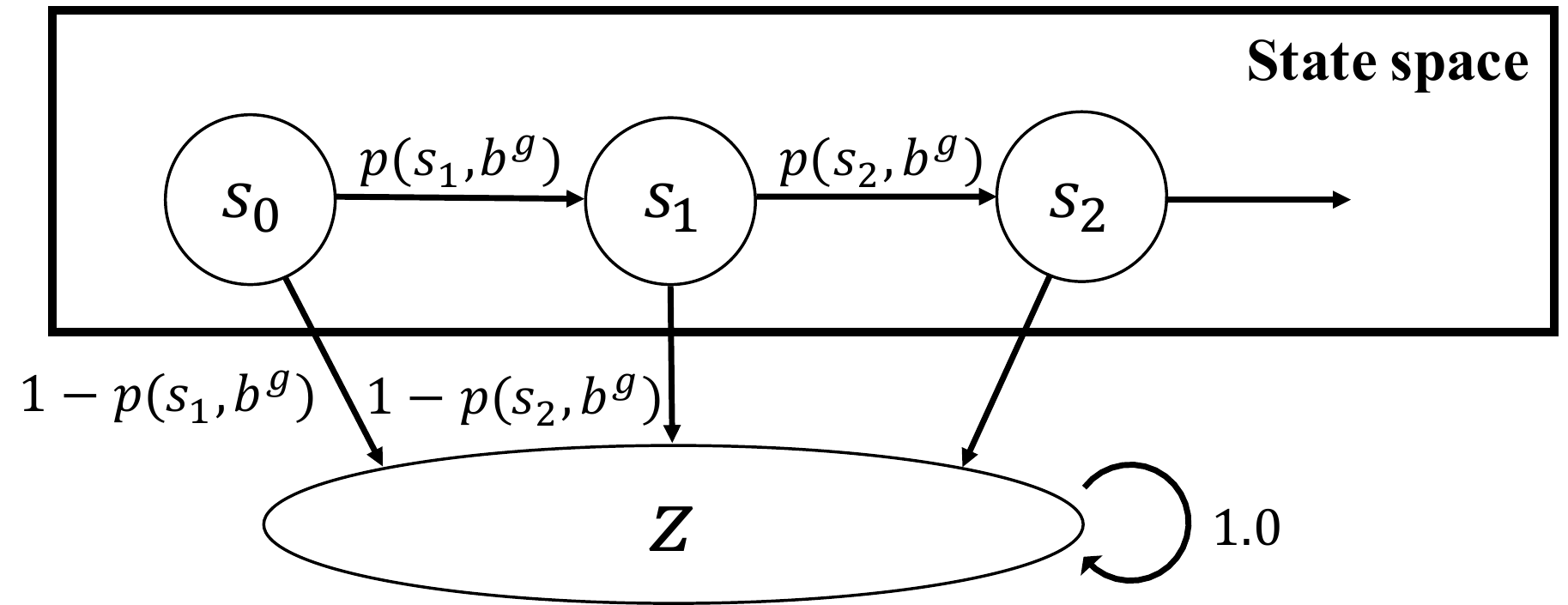}
\caption{Illustration of $\mathcal{M}_z$. This MDP is characterized by the virtual state $z$ and the virtual transition probability $P^z$.}
\label{fig:mdp_w_z}
\end{figure}

Hence, by introducing $z$ and $P^z$, we define the following MDP with the smooth, continuous transition probability:
\begin{alignat*}{2}
\mathcal{M}_z = \langle \mathcal{X}^+ \cup \{z\}, \mathcal{A}, P^z, r', g, \gamma \rangle.
\end{alignat*}
Figure~\ref{fig:mdp_w_z} shows a conceptual image of this MDP. 
Intuitively, in $\mathcal{M}_z$, the agent optimizes the policy under the virtual condition that a state-action pair may lead to the extremely undesirable state, $z$, with probability $1-p$.
Then, we solve the following equation instead of solving (\ref{eq:V_es2}):
\begin{equation*}
\label{eq:V_es2_practical}
V_{\mathcal{M}_z}(\bm{s}_t) = \max_{s_{t+1} \in \mathcal{X}_t^+} [ P^z_{\bm{s}_{t+1}} \cdot \{ r'(\bm{s}_{t+1}) + \gamma V_{\mathcal{M}_z}(\bm{s}_{t+1}) \} ].
\end{equation*}
For this equation, we used $r(\bm{z})=0$ and $V(\bm{z})=0$. 
For the optimal policy $\pi^*_z$ obtained by solving the above equation, we stop exploring the safety function if the following equation holds:
\[
\mathcal{Z}_t := \{ \bm{s}' \in \mathcal{S}^+ \mid \forall \bm{s} \in \mathcal{X}_t^-: \bm{s}' = f(\bm{s}, \pi^*_z(a \! \mid \! \bm{s})) \} \subseteq \mathcal{X}_t^-.
\]
Then, we optimize the cumulative reward in $\mathcal{Z}_t$ by solving the following equation:
\begin{alignat*}{2}
\nonumber
J_{\mathcal{Z}}^*(\bm{s}_t, \bm{b}^r_t, \bm{b}^g_t) = \max_{s_{t+1} \in \mathcal{Z}_t} \bigl[U_t(\bm{s}_{t+1}) + \gamma J_{\mathcal{Z}}^*(\bm{s}_{t+1}, \bm{b}^r_t, \bm{b}^g_t) \bigr].
\label{eq:J_Z}
\end{alignat*}

\section{Theoretical Results}
\label{sec:theory}

We now provide theoretical guarantees on the safety and near-optimality of our proposed algorithm.
Theorem~\ref{theo:SafeGuarantee} is associated with the safe expansion stage (i.e., step 1), which guarantees safety and convergence to the safe region.
Theorem~\ref{theo:NearOpt} ensures convergence toward the near-optimal cumulative reward.
Theorem~\ref{theo:es2} ensures that \algo\ still achieves the near-optimal cumulative reward even when the \es$^2$ algorithm is used.
In the rest of this paper, let $V_{\max} = R_{\max}/(1-\gamma)$. Also, let $D: \mathcal{M} \rightarrow \mathbb{R}$ be a diameter of an MDP, defined as $D(\mathcal{M}) = \min_\pi \max_{s^1 \ne s^2} T_{s^1 \rightarrow s^2}^\pi$, where $T_{s^1 \rightarrow s^2}^\pi$ is the expected number of time steps that policy $\pi$ takes to move from $s^1$ to $s^2$.

\subsection{Safety Guarantee and Completeness}

We first present a theorem related to the safety guarantee and completeness.

\begin{theorem}
\label{theo:SafeGuarantee}
Assume that the safety function $g$ satisfies $\|g\|_k^2 \le B^g$ and is $L$-Lipschitz continuous.
Also, assume that $S_0 \ne \emptyset$ and $g(\bm{s}) \ge h$ for all $\bm{s} \in S_0$.
Fix any $\epsilon_g > 0$ and $\Delta^g \in (0, 1)$.
Suppose that we conduct the stage of ``exploration of safety'' with the noise $n_t^g$ being $\sigma_g$-sub-Gaussian, and that $\beta_t = B^g + \sigma_g \sqrt{2(\Gamma^g_{t-1} + 1 + \log(1/\Delta^g))}$ until $\max_{\bm{s} \in G_t} w_t(\bm{s}) < \epsilon_g$ is achieved.
Finally, let $t^*$ be the smallest integer satisfying \[
\frac{t^*}{\beta_{t^*}\Gamma^g_{t^*}} \ge \frac{C_g|\bar{R}_0(S_0)|}{\epsilon_g^2} D(\mathcal{M}),
\]
with $C_g=8/\log(1+\sigma_g^{-2})$.
Then, the following statements jointly hold with probability at least $1 - \Delta^g$: 
\begin{itemize}
    \item $\forall t \ge 1$, $g(\bm{s}_t) \ge h$,
    \item $\exists t_0 \le t^*$, $\bar{R}_{\epsilon_g} (S_0) \subseteq \mathcal{X}^-_{t_0} \subseteq \bar{R}_0 (S_0)$.
\end{itemize}
\end{theorem}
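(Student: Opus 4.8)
The plan is to condition throughout on the event $\mathcal{E}^g$ that the confidence bounds of Lemma~\ref{lemma:safety_confidence} hold simultaneously for all $t\ge 1$; by that lemma $\Pr[\mathcal{E}^g]\ge 1-\Delta^g$, so it suffices to establish both bullets deterministically on $\mathcal{E}^g$ (the dynamics $f$ are deterministic, so once the GP bounds are granted there is no further randomness). The key consequence of $\mathcal{E}^g$ is that $g(\bm{s})\in Q_t(\bm{s})$ for every $t$, hence after intersecting, $l_t(\bm{s})\le g(\bm{s})\le u_t(\bm{s})$ for all relevant $\bm{s},t$; this single inequality drives everything. I would also record the elementary facts that $\mathcal{X}^-_0=S_0$, that $C_t$ shrinks and hence $w_t(\bm{s})=u_t(\bm{s})-l_t(\bm{s})$ is non-increasing at each fixed state, and that $\mathcal{X}^-_t$ is monotone non-decreasing in $t$.

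\textbf{Safety bullet.} I would show by induction on $t$ that every state in $\mathcal{X}^-_t$ satisfies $g\ge h$ and that the agent's trajectory stays inside $\mathcal{X}^-_t$. The base case is $\mathcal{X}^-_0=S_0$, safe by assumption since $g(\bm{s})\ge h$ there and $C_0\equiv[h,\infty)$ on $S_0$. For the step, if $\bm{s}\in S^-_t$ there is $\bm{s}'\in\mathcal{X}^-_{t-1}$ with $l_t(\bm{s}')-L\,d(\bm{s},\bm{s}')\ge h$; then $L$-Lipschitz continuity together with $l_t(\bm{s}')\le g(\bm{s}')$ gives $g(\bm{s})\ge g(\bm{s}')-L\,d(\bm{s},\bm{s}')\ge l_t(\bm{s}')-L\,d(\bm{s},\bm{s}')\ge h$, so $S^-_t$ — hence $\mathcal{X}^-_t\subseteq S^-_t$ — contains only safe states. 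Because the agent only ever sets goals $\xi\in G_t\subseteq\mathcal{X}^-_t$ and, by the $R_{\text{reach}}$ and $\bar R_{\text{ret}}$ conditions in the definition of $\mathcal{X}^-_t$, there is a path from $\bm{s}_{t-1}$ to $\xi$ lying entirely in $\mathcal{X}^-_t$ (and a return path), every visited state $\bm{s}_t$ satisfies $g(\bm{s}_t)\ge h$.

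\textbf{Completeness bullet.} For $\mathcal{X}^-_{t_0}\subseteq\bar R_0(S_0)$: the same induction, now reading $l_t(\bm{s}')\le g(\bm{s}')$ "from the algorithm's side", shows that every state the algorithm certifies is genuinely safe and genuinely reachable/returnable through certified-safe states, so $\mathcal{X}^-_t\subseteq\bar R_0(S_0)$ for all $t$, where $\bar R_0(S_0)$ is the fixed point of the true zero-slack safe-reachability operator of Appendix~A. For $\bar R_{\epsilon_g}(S_0)\subseteq\mathcal{X}^-_{t_0}$ I would let $t_0$ be the first time the stopping rule $\max_{\bm{s}\in G_t}w_t(\bm{s})<\epsilon_g$ fires and argue two things. (a) \emph{It fires by $t^*$.} Using $w_t(\bm{s})\le 2\beta_t^{1/2}\sigma^g_{t-1}(\bm{s})$ and the standard maximum-information-gain bound (as in \citet{srinivas2009gaussian,chowdhury2017kernelized}), $\sum_{t\le T}\bigl(\sigma^g_{t-1}(\bm{s}_t)\bigr)^2\le\tfrac{C_g}{4}\Gamma^g_T$, so $\sum_{t\le T}w_t(\bm{s}_t)^2\le C_g\,\beta_T\,\Gamma^g_T$. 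Each time the agent reaches a goal it has traveled at most $D(\mathcal{M})$ steps within the certified-safe set, so if the rule had not fired by $t^*$ there would be at least $t^*/D(\mathcal{M})$ goal-samples, each of squared width $\ge\epsilon_g^2$; comparing with the information bound and the defining inequality for $t^*$ gives a contradiction, with the $|\bar R_0(S_0)|$ factor entering because $G_t\subseteq\mathcal{X}^-_t\subseteq\bar R_0(S_0)$ bounds the number of distinct expander states, hence the number of "fresh" large-width states that can appear as $\mathcal{X}^-_t$ grows. (b) \emph{When it fires, $\mathcal{X}^-_{t_0}$ is complete up to $\epsilon_g$.} If not, pick $\bm{s}\in\bar R_{\epsilon_g}(S_0)\setminus\mathcal{X}^-_{t_0}$ on the boundary of $\mathcal{X}^-_{t_0}$; its witness $\bm{s}'$ has width $<\epsilon_g$, so $l_{t_0}(\bm{s}')>g(\bm{s}')-\epsilon_g$, and the $\epsilon_g$-slack in the definition of membership in $\bar R_{\epsilon_g}(S_0)$ forces $l_{t_0}(\bm{s}')-L\,d(\bm{s},\bm{s}')\ge h$, i.e. $\bm{s}\in S^-_{t_0}$; iterating this along a reach/return path contradicts $\bm{s}\notin\mathcal{X}^-_{t_0}$. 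Monotonicity of $\mathcal{X}^-_t$ then makes the containment persist.

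\textbf{Main obstacle.} The delicate part is step (a): converting the GP information-gain bound into exactly the stated implicit inequality for $t^*$ while correctly pricing (i) travel time, up to $D(\mathcal{M})$ steps per useful sample, (ii) the fact that $G_t$ — and hence $\argmax_{\bm{s}\in G_t}w_t(\bm{s})$ — changes as $\mathcal{X}^-_t$ grows, so the maximal width is not globally monotone and contributions must be tracked over the $\le|\bar R_0(S_0)|$ candidate states, and (iii) the $\beta_t$- and $\Gamma^g_t$-dependence. This is essentially the stateless analysis of \citet{turchetta2016safe} redone with the MDP-diameter term $D(\mathcal{M})$ inserted to account for moving between sampling locations; the Lipschitz-chaining in (b) and the two monotone-induction arguments are routine given the definitions in Appendix~A.
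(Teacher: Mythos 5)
Your proposal is correct and follows essentially the same route as the paper, whose own ``proof'' is a one-line deferral to Theorem~1 of \citet{turchetta2016safe} with the remark that $t$ now counts actions rather than samples; your sketch faithfully reconstructs that underlying argument (confidence-event conditioning, Lipschitz chaining for safety, the two monotone inductions, and the information-gain sample-complexity bound), and the $D(\mathcal{M})$ travel-time accounting you flag as the main obstacle is precisely the ``actions vs.\ samples'' modification the paper points to.
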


A proof is presented in the supplemental material. Theorem~\ref{theo:SafeGuarantee} guarantees that \algo\ is safe in the stage of \textit{exploration of safety} (i.e., step 1), as well as in the stage of \textit{optimization of reward} (i.e., step 2), with high probability.
In addition, after a sufficiently large number of time steps, $\mathcal{X}^-$ is guaranteed to be a super-set of $\bar{R}_{\epsilon_g} (S_0)$.

\subsection{Near-Optimality}

We next present a theorem on the near-optimality with respect to the cumulative reward.

\begin{theorem}
\label{theo:NearOpt}
Assume that the reward function $r$ satisfies $\|r\|_k^2 \le B^r$, and that the noise is $\sigma_r$-sub-Gaussian. 
Let $\pi_t$ denote the policy followed by \algo\ at time $t$, and let $\bm{s}_t$ and $\bm{b}^r_t, \bm{b}^g_t$ be the corresponding state and beliefs, respectively.
Let $t^*$ be the smallest integer satisfying
$\frac{t^*}{\beta_{t^*}\Gamma^g_{t^*}} \ge \frac{C_g|\bar{R}_0(S_0)|}{\epsilon_g^2} D(\mathcal{M})$, and fix any $\Delta^r \in (0, 1)$.
Finally, set $\alpha_t = B^r + \sigma_r \sqrt{2(\Gamma^r_{t-1} + 1 + \log(1/\Delta^r))}$ and
\[
\epsilon^*_V = V_{\max} \cdot (\Delta^g + \Sigma_{t^*}^r/R_{\max}),
\]
with $\Sigma^r_{t^*} =  \frac{1}{2}\sqrt{\frac{C_r \alpha_{t^*} \Gamma_{t^*}^r}{t^*}}$.
Then, with high probability,
\[
V^{\pi_t}(\bm{s}_t, \bm{b}^r_t, \bm{b}^g_t) \ge V^*(\bm{s}_t) - \epsilon^*_V
\]
--- i.e., the algorithm is $\epsilon^*_V$-close to the optimal policy --- for all but $t^*$ time steps, while guaranteeing safety with probability at least $1-\Delta^g$.
\end{theorem}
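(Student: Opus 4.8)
The plan is to decompose the optimality gap $V^*(\bm{s}_t) - V^{\pi_t}(\bm{s}_t, \bm{b}^r_t, \bm{b}^g_t)$ into two parts: a \emph{safety-exploration} part, accounting for the at most $t^*$ time steps spent in step~1 (which we simply exclude from the guarantee, exactly as in PAC-MDP analyses), and a \emph{reward-optimization} part, which must be bounded by $\epsilon^*_V$ for the remaining steps. For the latter, I would first invoke Theorem~\ref{theo:SafeGuarantee}: with probability at least $1-\Delta^g$, by some $t_0 \le t^*$ the pessimistic safe set satisfies $\bar{R}_{\epsilon_g}(S_0) \subseteq \mathcal{X}^-_{t_0} \subseteq \bar{R}_0(S_0)$, so the set $\mathcal{X}^-_{t^*}$ over which step~2 optimizes contains the set $\bar{R}_{\epsilon_g}(S_0)$ appearing in the definition of $V^*_\mathcal{M}$. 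Consequently, on this high-probability event, the optimal reachable/returnable policy used to define $V^*$ is \emph{feasible} in the MDP restricted to $\mathcal{X}^-_{t^*}$, so there is no loss from the restricted state space beyond the $\epsilon_g$ in the safe-set accuracy (which is already absorbed via $V^*$ being defined on $\bar{R}_{\epsilon_g}(S_0)$).

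The next step is to handle the fact that step~2 plans with the \emph{optimistic} reward $U_t = \mu^r_t + \alpha^{1/2}_{t+1}\sigma^r_t$ rather than the true $r$. Here I would use the standard optimism argument: by Lemma~\ref{lemma:reward_confidence}, with probability at least $1-\Delta^r$ we have $r(\bm{s}) \le U_t(\bm{s}) \le r(\bm{s}) + 2\alpha^{1/2}_{t+1}\sigma^r_t(\bm{s})$ for all $\bm{s}$, so the value $J^*_{\mathcal{X}}$ computed with $U_t$ upper-bounds the true optimal value on $\mathcal{X}^-_{t^*}$, hence upper-bounds $V^*(\bm{s}_t)$ minus the $\Delta^g$-failure contribution. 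Running the greedy policy $\pi_t$ with respect to $J^*_{\mathcal{X}}$ then incurs a per-step reward error of at most $2\alpha^{1/2}_{t^*}\sigma^r_{t^*}$, which telescopes over the discounted horizon into a value error of at most $\frac{2}{1-\gamma}\alpha^{1/2}_{t^*}\sigma^r_{t^*}$-type quantity; bounding the accumulated/averaged posterior standard deviation via the maximum-information-gain inequality $\sum_t \sigma^r_t{}^2 \le C_r \Gamma^r_t$ (the same inequality underlying $C_g$ in Theorem~\ref{theo:SafeGuarantee}) converts this into the claimed $\Sigma^r_{t^*} = \tfrac12\sqrt{C_r\alpha_{t^*}\Gamma^r_{t^*}/t^*}$, and scaling by $V_{\max}$ and repackaging the reward error as a fraction $\Sigma^r_{t^*}/R_{\max}$ of each per-step reward yields $V_{\max}\cdot \Sigma^r_{t^*}/R_{\max}$. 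Adding the $V_{\max}\cdot\Delta^g$ term that accounts for the safety-failure event (on which no reward guarantee is claimed and the value can drop by at most $V_{\max}$) gives $\epsilon^*_V = V_{\max}(\Delta^g + \Sigma^r_{t^*}/R_{\max})$.

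Finally I would assemble the union bound: the safe-exploration event from Theorem~\ref{theo:SafeGuarantee} holds with probability $\ge 1-\Delta^g$ and the reward-confidence event from Lemma~\ref{lemma:reward_confidence} holds with probability $\ge 1-\Delta^r$; on their intersection, safety holds for all $t\ge 1$ and $V^{\pi_t}(\bm{s}_t,\bm{b}^r_t,\bm{b}^g_t) \ge V^*(\bm{s}_t) - \epsilon^*_V$ for all but the $t^*$ steps of step~1, which is precisely the statement (with ``high probability'' referring to the intersection and the explicitly-stated $1-\Delta^g$ being the safety component).

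\textbf{Main obstacle.} The delicate point is controlling the \emph{bias from optimistic planning combined with a frozen belief}: because $\bm{b}^r_t$ is not updated during step~2, the per-step overestimate $2\alpha^{1/2}\sigma^r$ does not shrink along the executed trajectory, so the usual ``telescoping of confidence widths along the sample path'' must instead be replaced by an argument that the \emph{cumulative} information gathered by time $t^*$ already forces the \emph{average} width over the relevant state set to be small — i.e., one needs $\sigma^r_{t^*}(\bm{s})$ small uniformly (or in an averaged sense) over $\mathcal{X}^-_{t^*}$, which is where the choice of $t^*$ and the $\Gamma^r_{t^*}/t^*$ scaling must be leveraged carefully, mirroring the role $t^*$ plays for safety in Theorem~\ref{theo:SafeGuarantee}. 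Making the quantifiers match up — ``for all but $t^*$ steps'', uniform-over-states vs.\ along-trajectory, and the interplay between $\epsilon_g$-accuracy of $\mathcal{X}^-$ and the reward sub-optimality — is the part that will require the most care.
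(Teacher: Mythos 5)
Your proposal follows essentially the same route as the paper's proof: optimism of the planned value over $V^*$ via $\mathcal{X}^-_{t^*}\supseteq\bar{R}_{\epsilon_g}(S_0)$ and the upper confidence bound (the paper's Lemma~\ref{lemma_J_ge_V}), a $V_{\max}\cdot\Delta^g$ charge for escaping the certified safe set (formalized in the paper as the generalized induced inequality, Lemma~\ref{lemma_4}), and a uniform bound on the residual confidence width $\alpha^{1/2}_{t^*}\sigma^r_{t^*}$ by $\Sigma^r_{t^*}$ via the information gain (Lemma~\ref{lemma_Sigma_r}), which is exactly how the paper resolves the ``frozen belief'' obstacle you flag. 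The argument is correct in the same sense and to the same level of rigor as the paper's.
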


A detailed proof of Theorem~\ref{theo:NearOpt} is presented in the supplemental material.
The proof is based on the following idea.
After the agent fully explores the safe space, $\mathcal{X}^-$ satisfies $\bar{R}_{\epsilon_g} (S_0) \le \mathcal{X}^- \le \bar{R}_0 (S_0)$, and states in $\mathcal{X}^-$ are safe with high probability.
Once $\mathcal{X}^-$ converges, the probability of leaving the ``known'' safe space is small; hence, Theorem~\ref{theo:NearOpt} follows by adapting standard arguments from previous PAC-MDP results.
The key condition that allows us to prove the near-optimality of \algo\ is that, at every time step, the agent is optimistic with respect to the reward, and this optimism decays given a sufficient number of samples.
By optimizing the cumulative reward in $\mathcal{X}^-$ according to the optimism in the face of uncertainty principle, the acquired policy is $\epsilon^*_V$-close to the optimal policy in the original safety-constrained MDP.

Finally, we present a theoretical result related to the \es$^2$ algorithm. Specifically, we prove that \es$^2$ maintains the near-optimality of \algo.

\begin{theorem}
\label{theo:es2}
Assume that the reward function $r$ satisfies $\|r\|_k^2 \le B^r$, and that the noise is $\sigma_r$-sub-Gaussian.
Let $\pi_t$ denote the policy followed by \algo\ with the \es$^2$ algorithm at time $t$, and let $\bm{s}_t$ and $\bm{b}^r_t, \bm{b}^g_t$ be the corresponding state and beliefs, respectively.
Let $\tilde{t}$ be the smallest integer for which (\ref{eq:stopping_cond}) holds, and fix any $\Delta^r \in (0, 1)$.
Finally, set $\alpha_t = B^r + \sigma_r \sqrt{2(\Gamma^r_{t-1} + 1 + \log(1/\Delta^r))}$ and
\[
\tilde{\epsilon}_V = V_{\max} \cdot (\Delta^g + \Sigma_{\tilde{t}}^r/R_{\max}),
\]
with $\Sigma^r_{\tilde{t}} = \frac{1}{2} \sqrt{\frac{C_r \alpha_{\tilde{t}} \Gamma_{\tilde{t}}^r}{\tilde{t}}}$.
Then, with high probability,
\[
V^{\pi_t}(\bm{s}_t, \bm{b}^r_t, \bm{b}^g_t) \ge V^*(\bm{s}_t) - \tilde{\epsilon}_V
\]
--- i.e., the algorithm is $\tilde{\epsilon}_V$-close to the optimal policy --- for all but $\tilde{t}$ time steps while guaranteeing safety with probability at least $1-\Delta^g$.
\end{theorem}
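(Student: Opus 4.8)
The plan is to reduce Theorem~\ref{theo:es2} to Theorem~\ref{theo:NearOpt}. I would establish two facts: (i) \es$^2$ never endangers the agent, so the visited states remain in $\mathcal{X}_t^-$ just as in the pure stepwise scheme; and (ii) the stopping rule (\ref{eq:stopping_cond}) certifies that the safe region frozen at the stopping time $\tilde{t}$ already supports an almost-optimal policy, so that confining the Step-2 reward optimization to that region costs only an $O(\Sigma_{\tilde{t}}^r)$ amount of value. Granting (i)--(ii), the optimism-in-the-face-of-uncertainty and information-gain arguments underlying Theorem~\ref{theo:NearOpt} apply essentially verbatim with $t^*$ replaced by $\tilde{t}$. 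Fact (i), and hence the safety claim, is immediate: \es$^2$ changes only \emph{when} Step~1 stops, while in both steps the agent still moves only within $\mathcal{X}_t^-$, so Lemma~\ref{lemma:safety_confidence} together with the Lipschitz rule defining $S_t^-$ yields $g(\bm{s}_t)\ge h$ for all $t$ with probability at least $1-\Delta^g$, exactly as in Theorem~\ref{theo:SafeGuarantee}.

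On the same high-probability event the sandwich $\bar{R}_{\epsilon_g}(S_0)\subseteq\mathcal{X}_t^+$ and $\mathcal{X}_t^-\subseteq\bar{R}_0(S_0)$ holds for every $t$, since the optimistic and pessimistic sets are built from $u_t\ge g\ge l_t$ (Appendix~A; \citet{turchetta2016safe}). I would then prove fact (ii). Abbreviate $\mathcal{X}^\pm=\mathcal{X}_{\tilde{t}}^\pm$ and let $\pi_y^*$ be the optimal policy of $\mathcal{M}_y$. The stopping condition $\mathcal{Y}_{\tilde{t}}\subseteq\mathcal{X}^-$ makes $\mathcal{X}^-$ invariant under $\pi_y^*$, so started anywhere in $\mathcal{X}^-$ the policy $\pi_y^*$ stays in $\mathcal{X}^-$, where $r'=\mu^r-\alpha^{1/2}\sigma^r\le r$ by Lemma~\ref{lemma:reward_confidence}. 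The true constrained optimum $\pi^*$ of the ``Optimal solution'' paragraph is feasible in $\mathcal{M}_y$ because $\bar{R}_{\epsilon_g}(S_0)\subseteq\mathcal{X}^+$, and $r'\ge r-2\alpha_{\tilde{t}}^{1/2}\sigma_{\tilde{t}}^r$ pointwise on $\mathcal{X}^+$, so
\begin{align*}
V^{*}_{\mathcal{X}^-}(\bm{s})
&\;\ge\;V_{\mathcal{M}_y}^{\pi_y^*}(\bm{s})\;\ge\;V_{\mathcal{M}_y}^{\pi^*}(\bm{s}) \\
&\;\ge\;V^*(\bm{s})-2\,\mathbb{E}\!\left[\textstyle\sum_{\tau\ge0}\gamma^\tau\alpha_{\tilde{t}}^{1/2}\sigma_{\tilde{t}}^r(\bm{s}_\tau)\right],
\end{align*}
where $V^{*}_{\mathcal{X}^-}$ is the best true-reward value attainable without leaving $\mathcal{X}^-$: the first step uses $r'\le r$ along $\pi_y^*$'s (entirely-in-$\mathcal{X}^-$) trajectory, the second the optimality of $\pi_y^*$ in $\mathcal{M}_y$, and the third the reward lower bound. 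Hence stopping at $\tilde{t}$ sacrifices at most the cumulative reward-uncertainty term on the right.

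It remains to execute Step~2 as in Theorem~\ref{theo:NearOpt}. With $\mathcal{X}^-$ frozen and certified safe, the residual problem is an ordinary reward-unknown MDP on $\mathcal{X}^-$ solved by optimism, $U_t=\mu_t^r+\alpha_{t+1}^{1/2}\sigma_t^r\ge r$ with high probability, so the computed value obeys $J^*(\bm{s}_t)\ge V^{*}_{\mathcal{X}^-}(\bm{s}_t)$ while the realized value of the Step-2 policy trails $J^*$ by another cumulative-$\sigma^r$ term; bounding both reward-uncertainty terms by the information-gain estimate $\tfrac{1}{1-\gamma}\sqrt{C_r\alpha_{\tilde{t}}\Gamma_{\tilde{t}}^r/\tilde{t}}$ and charging the probability-$\le\Delta^g$ event of a safety failure or escape from the known region by $V_{\max}\Delta^g$ collapses the two sources of error into $\tilde{\epsilon}_V=V_{\max}(\Delta^g+\Sigma_{\tilde{t}}^r/R_{\max})$, valid at every time step except the $\tilde{t}$ steps of Step~1.

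The step I expect to be the main obstacle is that last estimate, shared with Theorem~\ref{theo:NearOpt}: the posterior width $\sigma_{\tilde{t}}^r$ along the \emph{future} Step-2 trajectory is not directly dominated by the information gained over the \emph{past} $\tilde{t}$ samples, so one cannot use a crude worst-case bound and must instead run the standard PAC-MDP potential/escape argument --- count the steps on which the exploration bonus exceeds a fixed multiple of $\Sigma_{\tilde{t}}^r$, show this number is at most $\tilde{t}$ via the identity $C_r\alpha_{\tilde{t}}\Gamma_{\tilde{t}}^r/(2\Sigma_{\tilde{t}}^r)^2=\tilde{t}$ that defines $\Sigma_{\tilde{t}}^r$, and absorb the rest into the ``all but $\tilde{t}$ time steps'' clause. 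A secondary bookkeeping point is that the Step-2 Bellman equation optimizes over $\mathcal{Y}_{\tilde{t}}$ rather than $\mathcal{X}^-$; since $\mathcal{Y}_{\tilde{t}}\subseteq\mathcal{X}^-$ is $\pi_y^*$-invariant and $\bm{s}_{\tilde{t}}$ enters it within one step, the two domains yield the same guarantee.
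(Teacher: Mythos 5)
Your proposal is sound in outline and reuses most of the paper's supporting machinery (the escape-event bound of Lemma~\ref{lemma_4}, the information-gain bound of Lemma~\ref{lemma_Sigma_r}, and the $\mathcal{Y}$-invariance of Lemma~\ref{lemma_s_t_in_Y}), but it replaces the paper's central lemma with a genuinely different argument. The paper's proof hinges on Lemma~\ref{lemma_J_X_J_Y}, which asserts the \emph{exact} domination $J_{\mathcal{Y}}^*(\bm{s}_t,\bm{b}_t^r,\bm{b}_t^g)\ge V^*(\bm{s}_t)$: it runs a Bellman-recursion with the safety indicator $I$, uses the stopping condition to replace $\max_{s_{t+1}\in\mathcal{X}_t^+}$ by $\max_{s_{t+1}\in\mathcal{Y}_t}$ in the optimal Bellman equation, and lets the optimism $U_t\ge r$ absorb the entire cost of restricting to $\mathcal{Y}$; the final chain is then identical in shape to Theorem~\ref{theo:NearOpt} with $\mathcal{X}^-$ replaced by $\mathcal{Y}$. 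You instead prove a quantitative ``region sufficiency'' bound, $V^*_{\mathcal{Y}}\ge V^*-2\,\mathbb{E}[\sum_\tau\gamma^\tau\alpha_{\tilde t}^{1/2}\sigma_{\tilde t}^r]$, by comparing $r'$ to $r$ on both sides of $\mathcal{X}^-$ and exploiting the optimality of $\pi_y^*$ in $\mathcal{M}_y$, and only afterwards invoke optimism for Step~2. Your route makes explicit why the stopping rule certifies near-optimality of the frozen region (arguably more transparently than the paper's one-line justification of the $\mathcal{X}_t^+\to\mathcal{Y}_t$ replacement), but it pays for the restriction and for optimism separately: summing your error terms gives roughly $V_{\max}(\Delta^g+3\Sigma^r_{\tilde t}/R_{\max})$ rather than the stated $\tilde\epsilon_V=V_{\max}(\Delta^g+\Sigma^r_{\tilde t}/R_{\max})$, so as written you establish the theorem only up to a constant factor in the reward-uncertainty term; to recover the paper's constant you would need to fold the region-restriction loss into the optimism bonus as Lemma~\ref{lemma_J_X_J_Y} does. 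Your closing worry about bounding $\sigma^r_{\tilde t}$ along the future trajectory is legitimate, but the paper does not run the PAC-MDP counting argument you sketch --- it simply applies Lemma~\ref{lemma_Sigma_r} uniformly over states.
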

The proof of Theorem~\ref{theo:es2} is presented in the supplemental material.
The proof is based on the following idea.
When the condition in (\ref{eq:stopping_cond}) is satisfied, the agent will not leave~$\mathcal{Y}$, and a near-optimal policy is obtained by optimizing the cumulative reward only in $\mathcal{Y}$ with the optimistically measured reward.
Also, as long as the agent is in $\mathcal{Y}$ $(\subseteq \mathcal{X}^-)$, safety is guaranteed with high probability.
The proof is similar to that for Theorem~\ref{theo:NearOpt}.

\section{Experiment}

In this section, we evaluate the performance of \algo\ through two experiments. 
One used a synthetic environment, while the other simulated Mars surface exploration.
We also show the effectiveness of our \es$^2$ and \pes$^2$ algorithms.

\subsection{Synthetic \textbf{\gpsafetygym}\ Environment}

\begin{figure*}[t]
\centering
    \hspace{3mm}
	\subfigure[\gpsafetygym.]{%
		\includegraphics[clip, height=34mm]{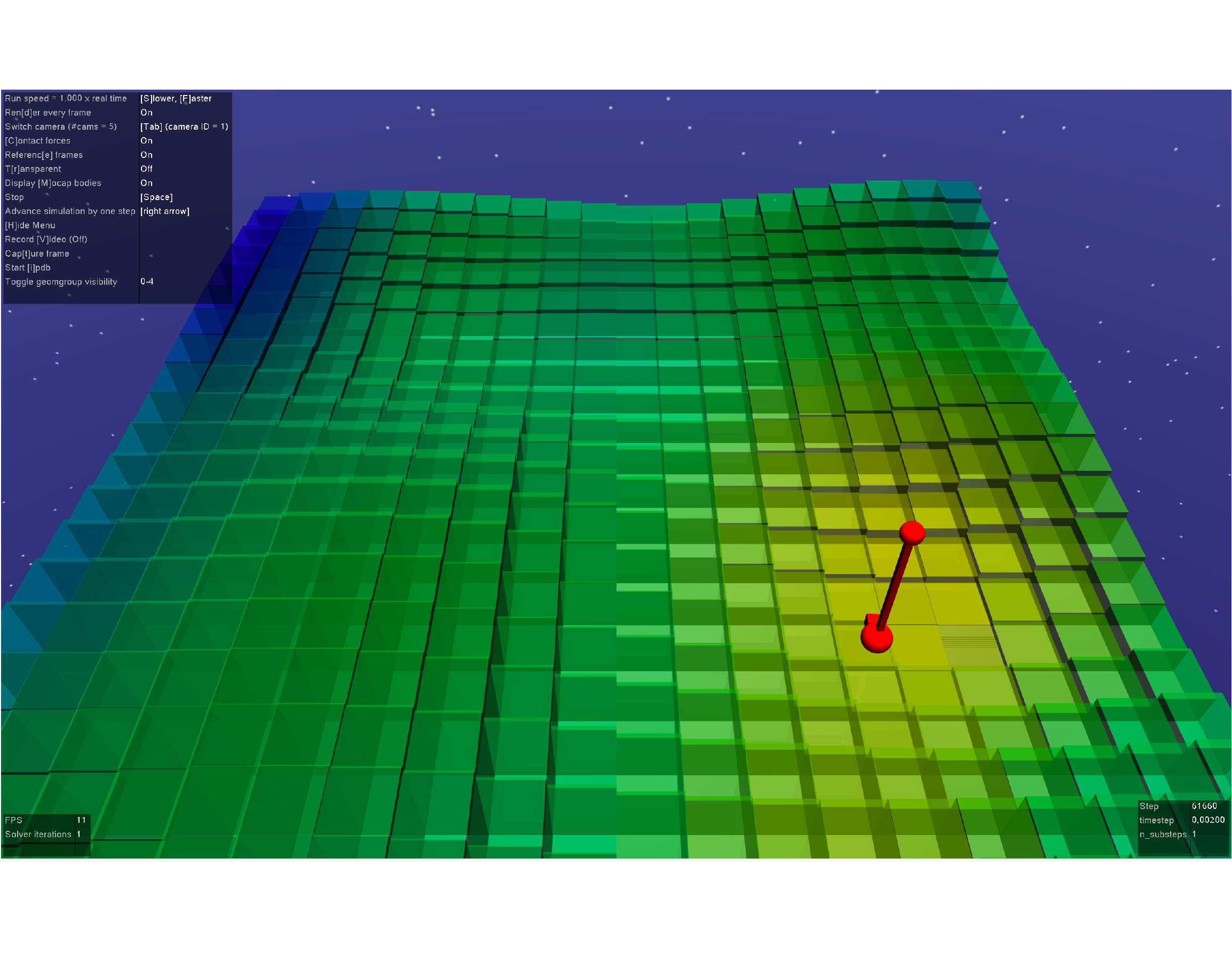}}%
	\hspace{4mm}
	\subfigure[Performance comparison.]{%
		\includegraphics[clip, height=34mm]{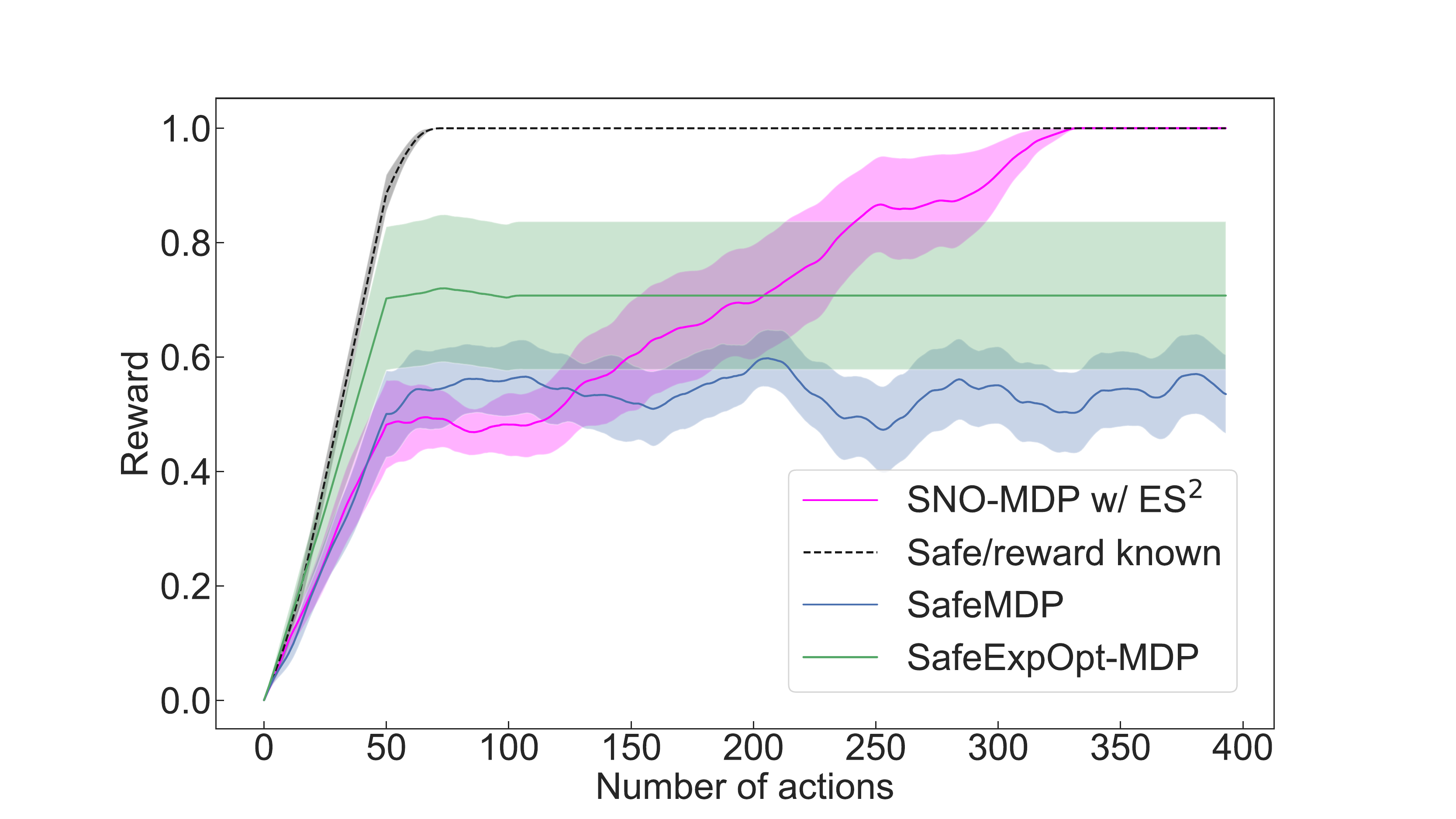}}%
	\hspace{-4mm}
	\subfigure[Effects of \es$^2$ and \pes$^2$.]{%
		\includegraphics[clip, height=34mm]{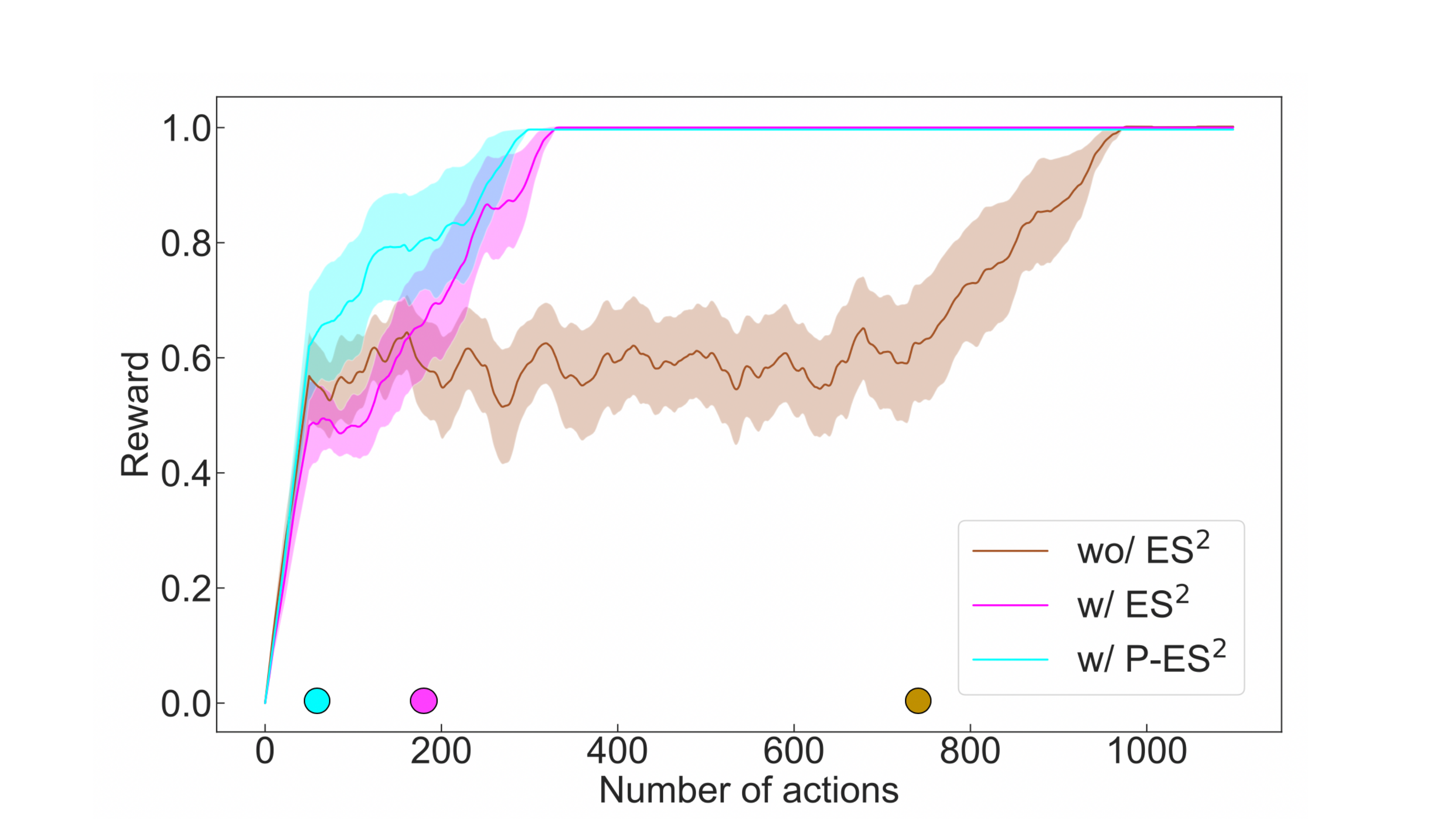}}%
	\caption{Experiment with synthetic data. (a) Example screen capture from the \gpsafetygym\ environment. (b) Average reward over the episodes, comparing the performance of \algo\ with \es$^2$ and the baselines. (c) Average reward over the episodes, showing the effects of \es$^2$ and \pes$^2$. The colored circles represent when the transition from safe exploration to reward optimization happens for each method. In both (b) and (c), the reward is normalized with respect to the \textsc{Safe/Reward known} case.}
	\label{fig:syn}
\end{figure*}

\paragraph{Settings.}

We constructed a new open-source environment for safe RL simulations named \gpsafetygym. This environment was built based on OpenAI Safety-Gym \cite{Ray2019}.
As shown in Figure~\ref{fig:syn}(a), \gpsafetygym\ represents the reward by a color (yellow: high; green: medium; blue: low), and the safety by height.

We considered a $20 \times 20$ square grid in which the reward and safety functions were randomly generated.
At every time step, an agent chose an action from \textit{stay}, \textit{up}, \textit{right}, \textit{down}, and \textit{left}.
The agent predicted the reward and safety functions by using different kernels on the basis of previous observations.
In this simulation, we allowed the agent to observe the reward and safety function values of the current state and neighboring states.
The kernel for reward was a radial basis function (RBF) with the length-scales of $2$ and prior variance of $1$.
The kernel for safety was also an RBF with the length-scales of $2$ and prior variance of $1$.
Finally, we set the discount factor to $\gamma = 0.99$, and confidence intervals parameters to $\alpha_t = 3$ and $\beta_t = 2$ for all $t \ge 1$.

\paragraph{Baselines.}

We empirically compared the performance of our \algo\ with \textsc{SafeMDP} \citep{turchetta2016safe} and \textsc{SafeExpOpt-MDP} \citep{wachi2018safe}, as well as a case called \textsc{Safe/Reward known}. 
In \textsc{SafeMDP}, the agent tries to expand the safe region without considering the reward.
In \textsc{SafeExpOpt-MDP}, the agent attempts to maximize the cumulative reward while leveraging the difference between the value function in $\mathcal{X}_t^+$ and that in $\mathcal{X}_t^-$.
Finally, \textsc{Safe/Reward known} is a non-exploratory case in which the  safety and reward functions are known a priori.

\paragraph{Metrics.}
We used the cumulative reward and the number of unsafe actions as comparison metrics.

\paragraph{Results.}

Figure~\ref{fig:syn}(b) compares the performance of \algo\ and the baselines in terms of the reward. 
For these results, the average reward was measured over the previous 50 time steps. 
\algo\ achieved the optimal reward after shifting to the stage of reward optimization, which outperforms \textsc{SafeMDP} and \textsc{SafeExpOpt-MDP} in terms of the reward after a sufficiently large number of time steps.
The \textsc{SafeMDP} agent did not aim to maximize the cumulative reward, and the \textsc{SafeExpOpt-MDP} agent was sometimes stucked in a local optimum when the expansion of the safe region was insufficient. 
Figure~\ref{fig:syn}(c) shows the empirical performance of the \es$^2$ and \pes$^2$ algorithms. 
\pes$^2$ achieved faster convergence in terms of the reward than the original \es$^2$ did. 
Also, all methods, including the baselines, did \textit{not} take any unsafe actions.

\subsection{Simulated Mars Surface Exploration}

\paragraph{Settings.}
We then conducted an experiment based on a Mars surface exploration scenario, as in \citet{turchetta2016safe} and \citet{wachi2018safe}. 
In this simulation, we used a publicly available Mars digital elevation model (DEM) that was created from observation data captured by the high-resolution imaging science experiment (HiRISE) camera \cite{mcewen2007mars}.

\begin{wrapfigure}{r}{41mm}
\includegraphics[width=39mm]{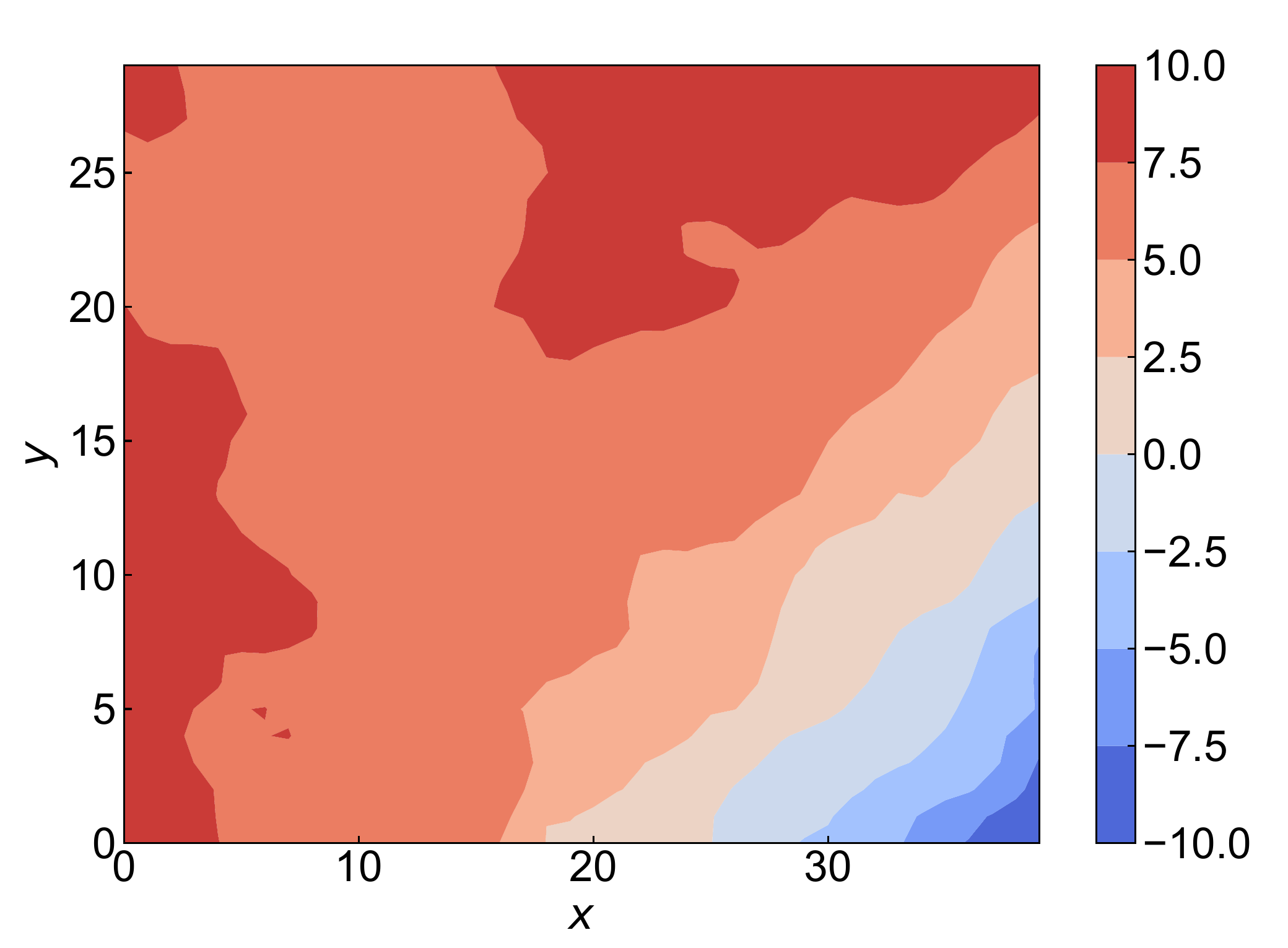}
\caption{Mars terrain data.}
\label{fig:mars}
\end{wrapfigure}
We created a 40 $\times$ 30 rectangular grid-world by clipping a region around latitude 30$^\circ$6' south and longitude 202$^\circ$2' east, as shown in Figure~\ref{fig:mars}.
At every time step, the rover took one of five actions: \textit{stay, up, down, left, and right}.
We assumed that any state in which the slope angle was greater than $25^\circ$ were unsafe.
The safety function $g$ was defined as the slope angle calculated from the DEM, and the safety threshold was $h = - \tan(25^\circ)$.

The rover predicted the elevation by using a GP with a Mat\'{e}rn kernel with $\nu= 5/2$. 
The length-scales were $15$~m, and the prior variance over elevation was $100$~m$^2$. 
We assumed a noise standard deviation of $0.075$ m. 
For the reward, we randomly defined a smooth, continuous reward.
To predict the reward function, the rover used a GP with RBF kernel having length-scales of $2$ and a prior variance over the reward of $2$. 
We set the confidence levels as $\alpha_t = 3$ and $\beta_t = 2, \forall t \ge 0$, and the discount factor as $\gamma = 0.9$.

\paragraph{Baselines and metrics.} We used the same baselines and metrics as in the previous synthetic experiment.

\paragraph{Results.} Table~\ref{tab:result} summarizes the results.
The reward was accumulated over the episode, which was normalized with respect to the \textsc{Safe/Reward known} case.
Our \algo\ with either \pes$^2$ or \es$^2$\ outperformed \textsc{SafeMDP} and \textsc{SafeExpOpt-MDP} in terms of the reward. 
This was expected, because SafeMDP does not aim to maximize the cumulative reward, and \textsc{SafeExpOpt-MDP} does not guarantee the near-optimality of the cumulative reward.
Also, \textit{no} unsafe action was executed by any of the tested algorithms.

\begin{table}[t]
\caption{Experimental results with real Mars data.}
\vskip 0.10in
\begin{center}
\begin{small}
\begin{sc}
  \begin{tabular}{*{3}{lrr}}
    \toprule
    & Reward & Unsafe actions \\
    \midrule
    \textbf{\algo} w/ \pes$^2$ & \textbf{0.81} & \textbf{0} \\
    \textbf{\algo} w/ \es$^2$ & \textbf{0.78} & \textbf{0} \\
    \textbf{\algo} & \textbf{0.49} & \textbf{0} \\
    \textsc{\textsc{SafeMDP}} & 0.34 & 0 \\
    \textsc{SafeExpOpt-MDP} & 0.59 & 0 \\
    \textsc{Safe/Reward known} & 1.00 & 0 \\
    \bottomrule
  \end{tabular}
\end{sc}
\end{small}
\end{center}
\label{tab:result}
\end{table}

\section{Conclusion}
We have proposed \algo, a stepwise approach for exploring and optimizing a safety-constrained MDP.
Theoretically, we proved a bound of the sample complexity to achieve $\epsilon_V$-closeness to the optimal policy while guaranteeing safety, with high probability.
We also proposed the \es$^2$ and \pes$^2$ algorithms for improving the efficiency in obtaining rewards.
We developed an open-source environment, \gpsafetygym, to test the effectiveness of \algo~. We also demonstrated the advantages of \algo\ using the real Mars terrain data.

\section*{Acknowledgement}
This work is sponsored by IBM Research AI and Toyota-Tsinghua joint project on health challenges for future city and smart health system design (20203910025).

\bibliographystyle{icml2020}
\bibliography{ref}

\newpage
\onecolumn

\section*{Appendices}

\section*{A. Definitions}

We repeat the relevant definitions in our paper.

\subsection*{A1. Safe Space: \rm{For more details, see \citet{turchetta2016safe}.}}

Set of the states identified as safe up to some confidence level of $\epsilon_g$:
\[
R^\text{safe}_{\epsilon_g}(X) = X \cup \{ \bm{s} \in \mathcal{S} \mid \exists \bm{s}' \in X: g(\bm{s}') -  \epsilon_g - Ld(\bm{s}, \bm{s}') \ge h \}.
\]

Set of states with reachability from $X$:
\[
R_\text{reach}(X) = X \cup \{\bm{s} \in \mathcal{S} \mid \exists \bm{s}' \in X, a \in \mathcal{A}(s'): \bm{s} = f(\bm{s}',a) \}.
\]

Set of states with returnability to $X$:
\begin{alignat*}{2}
&R_{\text{ret}}(X, \bar{X}) = \bar{X} \cup \{\bm{s} \in X \mid \exists a \in\mathcal{A}: f(\bm{s}, a) \in \bar{X} \}, \\
&R^n_\text{ret}(X, \bar{X}) = R_\text{ret}(X, R^{n-1}_\text{ret} (X, \bar{X})), \text{with } R^1_\text{ret}(X, \bar{X}) = R_\text{ret}(X, \bar{X}), \\
&\bar{R}_\text{ret}(X, \bar{X}) = \lim_{n \rightarrow \infty} R^n_\text{ret}(X, \bar{X}).
\end{alignat*}

Set of safe states with reachability and returnability:
\begin{alignat*}{2}
&R_{\epsilon_g}(X) = R^\text{safe}_{\epsilon_g}(X) \cap R_\text{reach}(X) \cap R_{\text{ret}}(R^\text{safe}_{\epsilon_g}(X), X), \\
&R_{\epsilon_g}(X) = R_{\epsilon_g}(R^{n-1}_{\epsilon_g}(X)), \text{with } R^1_{\epsilon_g}(X) = R_{\epsilon_g}(X), \\
&\bar{R}_{\epsilon_g}(X) = \lim_{n \rightarrow \infty} R^n_{\epsilon_g}(X).
\end{alignat*}

Pessimistic safe space:
\begin{alignat*}{2}
&S^-_t = \{ \bm{s} \in \mathcal{S} \mid \exists \bm{s}' \in \mathcal{X}^-_{t-1}: l_t(\bm{s}') - L \cdot d(\bm{s}, \bm{s}') \ge h \}, \\
&\mathcal{X}^-_t = \{ \bm{s} \in S^-_t \mid \bm{s} \in R_\text{reach}(\mathcal{X}^-_{t-1}) \cap \bar{R}_{\text{ret}}(S^-_t, \mathcal{X}^-_{t-1}) \}.
\end{alignat*}

Optimistic safe space:
\begin{alignat*}{2}
&S^+_t = \{ \bm{s} \in \mathcal{S} \mid \exists \bm{s}' \in \mathcal{X}^+_{t-1}: u_t(\bm{s}') - L \cdot d(\bm{s}, \bm{s}') \ge h \}, \\
&\mathcal{X}^+_t = \{ \bm{s} \in S^+_t \mid \bm{s} \in R_\text{reach}(\mathcal{X}^+_{t-1}) \cap \bar{R}_{\text{ret}}(S^+_t, \mathcal{X}^+_{t-1}) \}.
\end{alignat*}

\subsection*{A2. Optimization of Cumulative Reward}

For optimal policy:
\begin{alignat*}{2}
V_{\mathcal{M}}^*(\bm{s}_t) = \max_{s_{t+1} \in R_{\epsilon_g}(S_0)} \left[\ r(\bm{s}_{t+1}) + \gamma  V_{\mathcal{M}}^*(\bm{s}_{t+1}) \ \right].
\end{alignat*}

For balancing exploration and exploitation (neither \es$^2$ nor \pes$^2$ is used):
\begin{alignat*}{2}
&U_t(\bm{s}) = \mu_t^r(\bm{s})  + \alpha^{1/2}_{t+1} \cdot \sigma_t^r(\bm{s}), \\
&J_{\mathcal{X}}^*(\bm{s}_t, \bm{b}^r_t, \bm{b}^g_t) = \max_{s_{t+1} \in \mathcal{X}_{t^*}^-} \bigl[\ U_t(\bm{s}_{t+1}) + \gamma J_{\mathcal{X}}^*(\bm{s}_{t+1}, \bm{b}^r_t, \bm{b}^g_t) \ \bigr].
\end{alignat*}

\subsection*{A3. \es$^2$ Algorithm}

For checking whether the termination condition is satisfied:
\begin{alignat*}{2}
&V_{\mathcal{M}_y}(\bm{s}_t) = \max_{s_{t+1} \in \mathcal{X}_t^+} [\ r'(\bm{s}_{t+1}) + \gamma V_{\mathcal{M}_y}(\bm{s}_{t+1}) \ ], \\
&\mathcal{Y}_t = \{\bm{s}' \in \mathcal{S}^+ \mid \forall \bm{s} \in \mathcal{X}^-_t: \bm{s}' = f(\bm{s}, \pi^*_y(a \mid \bm{s})) \}, \\
&\mathcal{Y}_t \subseteq \mathcal{X}^-_t. 
\end{alignat*}
For balancing exploration and exploitation in terms of reward:
\begin{alignat*}{2}
&J_{\mathcal{Y}}^*(\bm{s}_t, \bm{b}^r_t, \bm{b}^g_t) = \max_{s_{t+1} \in \mathcal{Y}_t} \bigl[\ U_t(\bm{s}_{t+1}) + \gamma J_{\mathcal{Y}}^*(\bm{s}_{t+1}, \bm{b}^r_t, \bm{b}^g_t) \ \bigr].
\end{alignat*}

\subsection*{A4. \pes$^2$ Algorithm}

For checking whether the termination condition is satisfied:
\begin{alignat*}{2}
&V_{\mathcal{M}_z}(\bm{s}_t) = \max_{s_{t+1} \in \mathcal{X}_t^+} [\ P^z\cdot \{ r'(\bm{s}_{t+1}) + \gamma V_{\mathcal{M}_z}(\bm{s}_{t+1}) \} \ ], \\
&\mathcal{Z}_t = \{\bm{s}' \in \mathcal{S}^+ \mid \forall \bm{s} \in \mathcal{X}^-_t: \bm{s}' = f(\bm{s}, \pi_z^*(a \mid \bm{s})) \}, \\
&\mathcal{Z}_t \subseteq \mathcal{X}^-_t. 
\end{alignat*}
For balancing exploration and exploitation in terms of the reward:
\begin{alignat*}{2}
&J_{\mathcal{Z}}^*(\bm{s}_t, \bm{b}^r_t, \bm{b}^g_t) = \max_{s_{t+1} \in \mathcal{Z}_t} \bigl[\ U_t(\bm{s}_{t+1}) + \gamma J_{\mathcal{Z}}^*(\bm{s}_{t+1}, \bm{b}^r_t, \bm{b}^g_t) \ \bigr].
\end{alignat*}

\section*{B. Preliminary Lemma}

\begin{lemma}
\label{lemma:prelim_max_min}
For two arbitrary functions $f_1(x)$ and $f_2(x)$, the following inequality holds:
\begin{alignat*}{2}
\max_x f_1(x) - \max_x f_2(x) \ge \min_x (f_1(x) - f_2(x)).
\end{alignat*}
\end{lemma}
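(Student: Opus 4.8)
The plan is to prove Lemma~\ref{lemma:prelim_max_min} directly from the definitions of $\max$ and $\min$, using the standard ``add and subtract at the optimizer'' argument. Let $x^\star \in \argmax_x f_1(x)$, so that $\max_x f_1(x) = f_1(x^\star)$. The first step is to lower-bound the left-hand side: since $\max_x f_2(x) \ge f_2(x^\star)$, we have
\[
\max_x f_1(x) - \max_x f_2(x) \;=\; f_1(x^\star) - \max_x f_2(x) \;\le\; f_1(x^\star) - f_2(x^\star)
\]
--- wait, this gives an \emph{upper} bound, not what we want, so the correct direction is to keep $\max_x f_2(x) \ge f_2(x^\star)$ and note it makes the difference \emph{smaller}; instead I should observe $-\max_x f_2(x) \ge -f_2(x^\star)$ is false. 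The right move: start from $\max_x f_1(x) \ge f_1(x^\star_2)$ where $x^\star_2 \in \argmax_x f_2(x)$. Then
\[
\max_x f_1(x) - \max_x f_2(x) \;\ge\; f_1(x^\star_2) - f_2(x^\star_2) \;\ge\; \min_x\bigl(f_1(x) - f_2(x)\bigr),
\]
where the first inequality uses $\max_x f_1(x) \ge f_1(x^\star_2)$ together with $\max_x f_2(x) = f_2(x^\star_2)$, and the second inequality is immediate because $f_1(x^\star_2) - f_2(x^\star_2)$ is a particular value of the function $x \mapsto f_1(x) - f_2(x)$, hence at least its minimum.

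So the whole proof is three lines once the evaluation point is chosen correctly: (i) fix $x^\star_2$ achieving $\max_x f_2(x)$; (ii) use that $\max_x f_1$ dominates $f_1$ at any point, in particular at $x^\star_2$; (iii) bound the resulting pointwise difference below by the infimum over $x$. One should note the implicit assumption that the maxima (or suprema) are attained --- in the paper's setting the state space $\mathcal{S}$ is finite, so this is automatic; if one wants the statement for general functions, replace $\max$ by $\sup$ and $\min$ by $\inf$ and the same argument goes through with ``achieved'' weakened to ``approached within $\delta$'' and a limiting argument.

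The only real obstacle here is getting the direction of the inequality right --- it is easy to accidentally bound in the wrong direction by evaluating $f_2$ rather than $f_1$ at the chosen point, as my false start above illustrates. The key insight is that one must evaluate \emph{both} functions at the maximizer of the \emph{subtracted} function $f_2$, not at the maximizer of $f_1$; this way the subtracted $\max$ term is tight (equality) while the added $\max$ term is loose in the favorable direction. Beyond that, there is no calculation to grind through. I would present it as the three displayed inequalities above, preceded by the one-sentence setup fixing $x^\star_2 \in \argmax_x f_2(x)$, and that completes the proof of Lemma~\ref{lemma:prelim_max_min}.
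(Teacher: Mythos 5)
Your final argument is correct: fixing $x^\star_2 \in \argmax_x f_2(x)$, using $\max_x f_1(x) \ge f_1(x^\star_2)$ together with $\max_x f_2(x) = f_2(x^\star_2)$, and then bounding the pointwise difference below by the minimum is a complete and valid proof (the false start you flag and discard does not affect this). The paper takes a different, more algebraic route: it starts from the subadditivity of the max operator, $\max_x f_4(x) + \max_x f_5(x) \ge \max_x\{f_4(x)+f_5(x)\}$, and then performs two substitutions ($f_3 = -f_4$, $f_1 = f_2 + f_3$) together with the identity $-\max_x\{-f_3(x)\} = \min_x f_3(x)$ to rearrange that inequality into the stated one. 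The two arguments are logically equivalent --- subadditivity of max is itself proved by exactly your witness-point evaluation --- but yours is shorter and makes the mechanism transparent (evaluate both functions at the maximizer of the \emph{subtracted} function, so that term is tight and the other is slack in the favorable direction), whereas the paper's version trades that for a reduction to a standard known inequality at the cost of some bookkeeping with auxiliary functions. Your remark about attainment of the maxima is also apt: the paper never addresses it, implicitly relying on the finiteness of $\mathcal{S}$, and your sup/inf generalization is the right way to state the lemma without that hypothesis.
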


\begin{proof}
For two arbitrary functions $f_4(x)$ and $f_5(x)$, the following inequality holds:
\begin{alignat*}{2}
\max_x f_4(x) + \max_x f_5(x)
&\ \ge \max_x \{f_4(x) + f_5(x)\}.
\end{alignat*}
Let $f_2(x) = f_4(x) + f_5(x)$ and $f_3(x) = -f_4(x)$. Then,
\begin{alignat*}{2}
\max_x \{-f_3(x)\} + \max_x \{f_2(x) + f_3(x)\}
&\ \ge \max_x f_2(x), \\
\max_x \{f_2(x) + f_3(x)\} - \max_x f_2(x)
&\ \ge -\max_x \{-f_3(x)\}, \\
\max_x \{f_2(x) + f_3(x)\} - \max_x f_2(x)
&\ = \min_x f_3(x).
\end{alignat*}
Finally, let $f_1(x) = f_2(x) + f_3(x)$. 
Then, the desired lemma is obtained.
\end{proof}

\section*{C. Near-optimality}
\begin{lemma}
\label{lemma_J_ge_V}
Let $J_{\mathcal{X}}^*(\bm{s}_t, \bm{b}_t^r, \bm{b}_t^g)$ be the value function calculated by \algo\ without the \es$^2$ algorithm. Then, $J_{\mathcal{X}}^*(\bm{s}_t, \bm{b}_t^r, \bm{b}_t^g)$ satisfies the following inequality:
\[
J_{\mathcal{X}}^*(\bm{s}_t, \bm{b}_t^r, \bm{b}_t^g) \ge V^*(\bm{s}_t).
\]
\end{lemma}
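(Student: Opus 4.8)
The plan is to prove the inequality by comparing the two Bellman equations term by term, using the optimistic nature of the upper confidence bound $U_t$ and the fact that the feasible set in $J_{\mathcal{X}}^*$ is large enough to contain the optimal trajectory. Recall that $V^*$ is defined by the Bellman recursion over $\bar{R}_{\epsilon_g}(S_0)$ with the true reward $r$, while $J_{\mathcal{X}}^*$ is defined by the Bellman recursion over $\mathcal{X}_{t^*}^-$ with the optimistic reward $U_t = \mu_t^r + \alpha_{t+1}^{1/2}\sigma_t^r$. By Theorem~\ref{theo:SafeGuarantee}, on the high-probability event we have $\bar{R}_{\epsilon_g}(S_0) \subseteq \mathcal{X}_{t^*}^-$, so every state reachable under the optimal policy $\pi^*$ is also available as a choice in the $J_{\mathcal{X}}^*$ recursion. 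By Lemma~\ref{lemma:reward_confidence}, $U_t(\bm{s}) = \mu_t^r(\bm{s}) + \alpha_{t+1}^{1/2}\sigma_t^r(\bm{s}) \ge r(\bm{s})$ for all $\bm{s}$ with high probability.

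First I would set up an inductive / fixed-point argument on the value iteration operators. Define the operator $(\mathcal{T}_J V)(\bm{s}) = \max_{\bm{s}' \in \mathcal{X}_{t^*}^-}[U_t(\bm{s}') + \gamma V(\bm{s}')]$ and $(\mathcal{T}_V W)(\bm{s}) = \max_{\bm{s}' \in \bar{R}_{\epsilon_g}(S_0)}[r(\bm{s}') + \gamma W(\bm{s}')]$. Since $\bar{R}_{\epsilon_g}(S_0) \subseteq \mathcal{X}_{t^*}^-$ and $U_t \ge r$ pointwise, for any $V \ge W$ we get $(\mathcal{T}_J V)(\bm{s}) \ge \max_{\bm{s}' \in \bar{R}_{\epsilon_g}(S_0)}[U_t(\bm{s}') + \gamma V(\bm{s}')] \ge \max_{\bm{s}' \in \bar{R}_{\epsilon_g}(S_0)}[r(\bm{s}') + \gamma W(\bm{s}')] = (\mathcal{T}_V W)(\bm{s})$. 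Starting both iterations from the zero function and iterating, monotonicity of this comparison is preserved at every step, so the fixed points satisfy $J_{\mathcal{X}}^* \ge V^*$. Both operators are $\gamma$-contractions in the sup-norm, so convergence to the respective fixed points is guaranteed and the limit inequality passes through.

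The main obstacle I anticipate is handling the feasible-set subtlety carefully: the $J_{\mathcal{X}}^*$ recursion maximizes over $\mathcal{X}_{t^*}^-$, but one must make sure that a path feasible in $\bar{R}_{\epsilon_g}(S_0)$ (i.e., respecting reachability/returnability there) remains a valid \emph{sequence of transitions} in the MDP restricted to $\mathcal{X}_{t^*}^-$; this is exactly why the set inclusion from Theorem~\ref{theo:SafeGuarantee} (with returnability built into $\bar{R}$) is needed rather than just a pointwise safety statement. A secondary point to be careful about is that the dependence of $U_t$ on $t$ (through $\mu_t^r, \sigma_t^r, \alpha_{t+1}$) is frozen once we enter Step~2, so the Bellman equation defining $J_{\mathcal{X}}^*$ has a stationary reward and the contraction argument applies cleanly; I would state this explicitly. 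Finally, I would note that the whole argument is conditioned on the intersection of the high-probability events in Lemmas~\ref{lemma:safety_confidence} and \ref{lemma:reward_confidence} and Theorem~\ref{theo:SafeGuarantee}, which is where the ``with high probability'' qualifier enters.
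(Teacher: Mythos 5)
Your proof is correct and rests on the same two pillars as the paper's: the set inclusion $\bar{R}_{\epsilon_g}(S_0) \subseteq \mathcal{X}_{t^*}^-$ from Theorem~\ref{theo:SafeGuarantee}, which lets you shrink the feasible set in the $J_{\mathcal{X}}^*$ recursion, and the optimism $U_t \ge r$ from Lemma~\ref{lemma:reward_confidence}. Where you diverge is in how the recursion is closed. The paper bounds the difference $J_{\mathcal{X}}^*(\bm{s}_t,\bm{b}_t^r,\bm{b}_t^g) - V^*(\bm{s}_t)$ directly, invoking its preliminary Lemma~\ref{lemma:prelim_max_min} ($\max f_1 - \max f_2 \ge \min(f_1-f_2)$) together with an indicator function $I$ that restricts attention to $\bar{R}_{\epsilon_g}(S_0)$; it then takes a minimum over states and iterates the resulting one-step inequality, so that the discount factor $\gamma^n$ kills any negative contribution in the limit. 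You instead compare the two Bellman operators $\mathcal{T}_J$ and $\mathcal{T}_V$ via a monotone value-iteration argument from the zero function and pass to the fixed points by contraction. Your route dispenses with Lemma~\ref{lemma:prelim_max_min} and the indicator-function bookkeeping entirely, and is the more standard and arguably cleaner packaging; the paper's version has the minor advantage of producing an explicit chain of inequalities on the value difference that it reuses almost verbatim in Lemma~\ref{lemma_J_X_J_Y} for the \es$^2$ variant. Your two flagged caveats --- that reachability of the optimal trajectory must be preserved under the set inclusion, and that the beliefs (hence $U_t$) are frozen in Step~2 so the operator is stationary --- are both genuine and are handled implicitly rather than explicitly in the paper, so making them explicit is a strict improvement in rigor.
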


\begin{proof}
Consider a state $\bm{s}_t$ and beliefs $\bm{b}_t^r$ and $\bm{b}_t^g$. Also, let $I$ denote the following safety indicator function:
\begin{eqnarray}
I(\bm{s}) :=
  \left\{\begin{array}{ll}
  1 & \quad \text{if}\ \ \bm{s} \in \bar{R}_{\epsilon_g}(S_0), \\
  0 & \quad \text{otherwise}.
  \end{array}
  \right.
  \label{eqn:safety_indicator}
\end{eqnarray}
Then, the following chain of equations and inequalities holds:
\begin{alignat*}{2}
&\ J_{\mathcal{X}}^*(\bm{s}_t, \bm{b}_t^r, \bm{b}_t^g) - V^*(\bm{s}_t) \\
= &\ \max_{s_{t+1} \in \mathcal{X}_{t^*}^-} \left[\ U_t(\bm{s}_{t+1}) + \gamma J_{\mathcal{X}}^*(\bm{s}_{t+1}, \bm{b}_t^r, \bm{b}_t^g) \ \right] - \max_{s_{t+1} \in \bar{R}_{\epsilon_g}(S_0)} \left[\ r(\bm{s}_{t+1}) + \gamma  V_{\mathcal{M}}^*(\bm{s}_{t+1}) \ \right] \\
\ge &\ \max_{s_{t+1} \in \bar{R}_{\epsilon_g}(S_0)} \left[\ U_t(\bm{s}_{t+1}) + \gamma J_{\mathcal{X}}^*(\bm{s}_{t+1}, \bm{b}_t^r, \bm{b}_t^g) \ \right] - \max_{s_{t+1} \in \bar{R}_{\epsilon_g}(S_0)} \left[\ r(\bm{s}_{t+1}) + \gamma  V_{\mathcal{M}}^*(\bm{s}_{t+1}) \ \right] \\
= &\ \max_{a_t} \left[\ I(\bm{s}_{t+1}) \cdot \{U_t(\bm{s}_{t+1}) + \gamma J_{\mathcal{X}}^*(\bm{s}_{t+1}, \bm{b}_t^r, \bm{b}_t^g)\} \ \right] - \max_{a_t} \left[\ I(\bm{s}_{t+1}) \cdot \{r(\bm{s}_{t+1}) + \gamma  V_{\mathcal{M}}^*(\bm{s}_{t+1}) \}\ \right] \\
\ge &\ \min_{a_t} \left[\ I(\bm{s}_{t+1}) \cdot \{ U_t(\bm{s}_{t+1}) - r(\bm{s}_{t+1}) \} + \gamma I(\bm{s}_{t+1}) J_{\mathcal{X}}^*(\bm{s}_{t+1}, \bm{b}_t^r, \bm{b}_t^g) - \gamma I(\bm{s}_{t+1}) V^*(\bm{s}_{t+1}) \ \right] \\
= &\ \min_{a_t} \left[\ I(\bm{s}_{t+1}) \cdot \{ U_t(\bm{s}_{t+1}) - r(\bm{s}_{t+1}) \} + \gamma I(\bm{s}_{t+1}) \{ J_{\mathcal{X}}^*(\bm{s}_{t+1}, \bm{b}_t^r, \bm{b}_t^g) - V^*(\bm{s}_{t+1}) \} \ \right].
\end{alignat*}

The third line follows from $\mathcal{X}_{t^*}^- \supseteq \bar{R}_{\epsilon_g}(S_0)$ in Theorem~\ref{theo:SafeGuarantee}. 
Also, the fourth line follows from the definition of $I$, and the fifth line follows from Lemma~\ref{lemma:prelim_max_min}.
Because $\bm{s}$ is arbitrary in the above derivation, we have
\begin{alignat*}{2}
\min_{\bm{s}_t} [\ J_{\mathcal{X}}^*(\bm{s}_t, \bm{b}_t^r, \bm{b}_t^g) - V^*(\bm{s}_t)\ ] 
\ge \min_{\bm{s}_{t+1}} \left[\ I(\bm{s}_{t+1}) \{ U_t(\bm{s}_{t+1}) - r(\bm{s}_{t+1}) \} + \gamma I(\bm{s}_{t+1}) \{J^*(\bm{s}_{t+1}, \bm{b}_t^r, \bm{b}_t^g) - V^*(\bm{s}_{t+1}) \}\ \right].
\end{alignat*}
By Lemma~\ref{lemma:reward_confidence}, the following equation holds with probability at least $1-\Delta^r$:
\begin{alignat*}{2}
\min_{\bm{s}_t} [\ J_{\mathcal{X}}^*(\bm{s}_t, \bm{b}_t^r, \bm{b}_t^g) - V^*(\bm{s}_t, \bm{b}_t^r, \bm{b}_t^g)\ ] \ge  \gamma \cdot \min_{\bm{s}_{t+1}} \left[ I(\bm{s}_{t+1}) \{ J_{\mathcal{X}}^*(\bm{s}_{t+1}, \bm{b}_t^r, \bm{b}_t^g) - V^*(\bm{s}_{t+1})\}\ \right]
\end{alignat*}
Repeatedly applying this equation proves the desired lemma. 
Therefore, we have
\[
J_{\mathcal{X}}^*(\bm{s}_t, \bm{b}_t^r, \bm{b}_t^g) \ge V^*(\bm{s}_t)
\]
with high probability.
\end{proof}

\begin{lemma}
\label{lemma_4}
{\rm\textbf{(Generalized induced inequality)}} Let $\bm{b}^r, \bm{b}^g, r$ and $\hat{\bm{b}}^r, \hat{\bm{b}}^g, \hat{r}$ be the beliefs (over reward and safety, respectively) and reward functions (including the exploration bonus) that are identical on some set of states $\Omega$ --- i.e., $\bm{b}^r = \hat{\bm{b}}^r$, $\bm{b}^g = \hat{\bm{b}}^g$, and $r = \hat{r}$ for all $\bm{s} \in \Omega$. Let $P(A_{\Omega})$ be the probability that a state not in $\Omega$ is generated when starting from state $\bm{s}$ and following a policy~$\pi$. If the value is bound in $[0, V_{\max}]$, then
\[
V^\pi(\bm{s}, \bm{b}^r, \bm{b}^g, r) \ge V^\pi(\bm{s}, \hat{\bm{b}}_r, \hat{\bm{b}}_g, \hat{r}) - V_{\max} P(A_{\Omega}),
\]
where we now make explicit the dependence of the value function on the reward.
\end{lemma}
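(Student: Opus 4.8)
The plan is to prove this by a \emph{coupling} argument, adapting the classical ``induced inequality'' used in PAC-MDP analyses (cf.\ \citet{strehl2008analysis}) to our belief-parametrized setting. The key observation is that, since $\bm{b}^r = \hat{\bm{b}}^r$, $\bm{b}^g = \hat{\bm{b}}^g$, and $r = \hat{r}$ hold on $\Omega$, both the (possibly stochastic, belief-dependent) one-step transition kernel and the immediate reward agree at every state in $\Omega$. Hence one can couple the trajectory $\bm{s}_0 = \bm{s}, \bm{s}_1, \bm{s}_2, \dots$ generated by $\pi$ under $(\bm{b}^r, \bm{b}^g, r)$ with the trajectory generated by $\pi$ under $(\hat{\bm{b}}^r, \hat{\bm{b}}^g, \hat{r})$ so that the two are \emph{identical} up to and including the first time $T := \min\{\tau \ge 0 : \bm{s}_\tau \notin \Omega\}$ that a state outside $\Omega$ appears, with $T = \infty$ if no such state is ever generated. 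By definition $\Pr[T < \infty] = P(A_\Omega)$, and this event coincides in the two coupled models.

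The first step is to split the discounted return at the random stopping time $T$. Writing $V^\pi(\bm{s}, \hat{\bm{b}}^r, \hat{\bm{b}}^g, \hat{r}) = \mathbb{E}[\sum_{\tau=0}^\infty \gamma^\tau \hat{r}(\bm{s}_\tau)]$ and using $\hat{r}(\bm{s}_\tau) = r(\bm{s}_\tau)$ for $\tau < T$,
\[
V^\pi(\bm{s}, \hat{\bm{b}}^r, \hat{\bm{b}}^g, \hat{r}) = \mathbb{E}\!\left[\sum_{\tau=0}^{T-1} \gamma^\tau r(\bm{s}_\tau)\right] + \mathbb{E}\!\left[\mathbf{1}[T<\infty]\,\gamma^{T} \widehat{V}^\pi(\bm{s}_{T})\right],
\]
where $\widehat{V}^\pi(\bm{s}_T)$ is the value-to-go from $\bm{s}_T$ in the hatted model. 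Since $\gamma^{T} \le 1$ and $\widehat{V}^\pi \le V_{\max}$, the second term is at most $V_{\max}\Pr[T<\infty] = V_{\max} P(A_\Omega)$. Applying the same decomposition to the un-hatted model, the first sum is literally the \emph{same} random variable by the coupling, and the post-escape term $\mathbb{E}[\mathbf{1}[T<\infty]\gamma^T V^\pi(\bm{s}_T)]$ is nonnegative because $V^\pi \ge 0$; hence $V^\pi(\bm{s}, \bm{b}^r, \bm{b}^g, r) \ge \mathbb{E}[\sum_{\tau=0}^{T-1} \gamma^\tau r(\bm{s}_\tau)]$. Subtracting the two estimates gives $V^\pi(\bm{s}, \bm{b}^r, \bm{b}^g, r) - V^\pi(\bm{s}, \hat{\bm{b}}^r, \hat{\bm{b}}^g, \hat{r}) \ge -V_{\max} P(A_\Omega)$, which is the claim.

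The main obstacle is in the setup rather than the algebra: one must verify that the coupling is legitimate, i.e., that on $\Omega$ the entire one-step behavior --- the action picked by $\pi$, the immediate (bonus-augmented) reward, and the next-state distribution --- depends on the model \emph{only} through quantities that are equal on $\Omega$. This is exactly where it matters that $\bm{b}^r, \bm{b}^g$ are held fixed along the rollout (they are not re-estimated from fresh observations), so that merely being in $\Omega$ at step $\tau$ forces agreement of all relevant quantities at that step; if the beliefs were updated, agreement on $\Omega$ at time $0$ would not propagate forward. A secondary technical point is that $T$ is a random stopping time, so the split of the infinite discounted sum and the bound $\mathbb{E}[\mathbf{1}[T<\infty]\gamma^T \widehat{V}^\pi(\bm{s}_T)] \le V_{\max} P(A_\Omega)$ should be justified by dominated convergence, using that $r$ (including the bonus) is bounded and $\gamma < 1$ so the tail $\sum_{\tau \ge T}\gamma^\tau r(\bm{s}_\tau)$ is absolutely summable. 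Everything beyond that is bookkeeping.
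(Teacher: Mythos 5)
Your proof is correct and matches the paper's approach: the paper disposes of this lemma by citing Lemma~8 of \citet{strehl2005theoretical}, and your coupling/stopping-time decomposition at the first escape from $\Omega$ is precisely the standard argument behind that cited result, here simplified further by the fact that the dynamics $f$ are deterministic and the beliefs are frozen along the rollout. Nothing is missing; you have simply written out in full what the paper delegates to the citation.
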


\begin{proof}
The lemma follows from Lemma~8 in \citet{strehl2005theoretical}.
\end{proof}

\begin{lemma}
\label{lemma_Sigma_r}
Assume that the reward function $r$ satisfies $\|r\|_k^2 \le B^r$, and that the noise $n_t^r$ is $\sigma_r$-sub-Gaussian. If $\alpha_t = B^r + \sigma_r \sqrt{2(\Gamma^r_{t-1} + 1 + \log(1/\Delta^r))}$ and $C_r = 8/\log(1+\sigma_r^{-2})$, then the following holds:
\[
\frac{1}{2}\sqrt{\frac{C_r \alpha_{t^*} \Gamma_{t^*}^r}{t^*}} \ge \alpha_{t^*}^{1/2} \sigma_{t^*}^r(\bm{s}),
\]
with probability at least $1 - \Delta^r$.
\end{lemma}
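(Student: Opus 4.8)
The plan is to bound the posterior standard deviation at the visited state by the \emph{maximum} posterior standard deviation over all states, and then relate that maximum to the information gain accumulated over $t^*$ steps. First I would note that $\sigma_{t^*}^r(\bm{s}) \le \max_{\bm{s}' \in \mathcal{S}} \sigma_{t^*}^r(\bm{s}')$, and that for any GP the one-step information gain from sampling the state with largest variance is at least $\tfrac{1}{2}\log(1 + \sigma_r^{-2} (\sigma_{t}^r)^2_{\max})$, which is monotone increasing in the variance. Because the algorithm keeps taking observations, after $t^*$ samples the total information gain is at most $\Gamma^r_{t^*}$ (the maximal mutual information), so by a pigeonhole/averaging argument there is a step at which the per-step gain is at most $\Gamma^r_{t^*}/t^*$; combined with the standard inequality $x \le \tfrac{2}{\log(1+\sigma_r^{-2})}\log(1+\sigma_r^{-2}x)$ valid for $x \in [0, 1]$ (after rescaling the kernel so $\sigma^r_0 \le 1$) this yields $(\sigma_{t^*}^r)^2_{\max} \le C_r \Gamma^r_{t^*}/(4 t^*)$ with $C_r = 8/\log(1+\sigma_r^{-2})$.

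Next I would multiply through by $\alpha_{t^*}$ and take square roots: since $\alpha_{t^*} \ge 1$ and the bound on $(\sigma^r_{t^*})^2_{\max}$ holds deterministically from the GP update equations, we get $\alpha_{t^*}^{1/2}\sigma^r_{t^*}(\bm{s}) \le \alpha_{t^*}^{1/2} (\sigma^r_{t^*})_{\max} \le \tfrac{1}{2}\sqrt{C_r \alpha_{t^*}\Gamma^r_{t^*}/t^*}$. The only place probability enters is through the choice of $\alpha_t$ itself matching Lemma~\ref{lemma:reward_confidence}; the ``with probability at least $1-\Delta^r$'' in the statement is really inherited from the confidence-interval lemma being invoked in tandem, rather than from this inequality, so I would phrase the proof as: on the event of Lemma~\ref{lemma:reward_confidence} (probability $\ge 1-\Delta^r$), the stated bound holds. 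This mirrors the argument in \citet{srinivas2009gaussian} and Lemma~4 of \citet{chowdhury2017kernelized} for regret bounds, specialized here to a single coordinate of the cumulative-variance sum.

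The main obstacle I expect is making the averaging step rigorous: one must be careful that $t^*$ counts \emph{time steps} whereas $\Gamma^r_{t^*}$ bounds information gain over $t^*$ \emph{observations}, and in \algo\ multiple observations may be collected while travelling to a goal (the $T_{\bm{s}_{t-1}\to\xi}$ increment in Algorithm~\ref{algorithm1}). I would handle this by using the fact that the number of observations is at least the number of time steps counted, or by simply defining $\Gamma^r_{t^*}$ as the maximal information gain over any $t^*$ observations so that the sum $\sum_{i=1}^{t^*}(\sigma^r_{i-1})^2(\bm{s}_i) \le C_r \Gamma^r_{t^*}/4$ in the usual way, and then lower-bounding the left side by $t^*\cdot\min_i (\sigma^r_{i-1})^2(\bm{s}_i)$; picking the minimizing step and noting posterior variance is non-increasing in the number of observations gives the bound at step $t^*$. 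A secondary (purely cosmetic) subtlety is the kernel-rescaling needed so that prior variances are bounded by $1$, which the paper's assumption $k^r(\bm{s},\bm{s})\le 1$ (implicit in the RKHS-norm normalization) already provides.
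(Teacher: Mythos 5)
The paper's own ``proof'' of Lemma~\ref{lemma_Sigma_r} is a single sentence deferring everything to Lemma~4 of \citet{chowdhury2017kernelized}; your proposal essentially unpacks the information-gain machinery behind that citation (the sum-of-variances bound $\sum_{i\le t^*}(\sigma^r_{i-1})^2(\bm{s}_i)\le \tfrac{C_r}{4}\Gamma^r_{t^*}$ via the inequality $x\le \tfrac{1}{\log(1+\sigma_r^{-2})}\log(1+\sigma_r^{-2}x)$, followed by pigeonhole), so in spirit it is the same route, just actually written out. Your reconstruction is correct up to the averaging step, and you correctly flag the two places where it creaks. The one that deserves emphasis is the transfer from the pigeonhole-selected step to the state $\bm{s}$ in the lemma: pigeonhole only bounds $(\sigma^r_{i_0-1})^2(\bm{s}_{i_0})$ at the \emph{sampled} point of some step $i_0$, and monotonicity of the posterior variance then controls $\sigma^r_{t^*}$ \emph{at that same point}, not at an arbitrary $\bm{s}$. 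To get the bound for every $\bm{s}$ (or for $(\sigma^r_{t^*})_{\max}$, as you write) one needs the sampling rule to be reward-variance-maximizing at each step, but in \algo\ the exploration targets are chosen by maximizing the \emph{safety} confidence width $w_t$ over expanders, and reward observations are collected incidentally along shortest paths to $\xi$; so the premise your argument needs is not supplied by the algorithm. This gap is inherited from the paper --- the cited Lemma~4 of \citet{chowdhury2017kernelized} bounds the cumulative quantity $\sum_t\sigma_{t-1}(\bm{s}_t)$ along the sampled trajectory and does not by itself yield a pointwise bound on $\sigma^r_{t^*}(\bm{s})$ for generic $\bm{s}$ --- so you have not introduced a new error, but your proof as written would need either the max-variance sampling assumption made explicit or the lemma's conclusion restricted to the states actually visited. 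The observation-count versus time-step mismatch you raise is real but benign in the direction the inequality needs, exactly as you argue.
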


\begin{proof}
The lemma follows from Lemma~4 in \citet{chowdhury2017kernelized}.
\end{proof}

\section*{D. \es$^2$ algorithm}

\begin{lemma}
\label{lemma_s_t_in_Y}
Assume that $\mathcal{Y}_t \subseteq \mathcal{X}_t^-$ holds. Suppose that we obtain the optimal policy, $\pi^*_y$ on the basis of $J_{\mathcal{Y}}^*(\bm{s}_t, \bm{b}^r_t, \bm{b}^g_t) = \max_{s_{t+1} \in \mathcal{Y}_t} \bigl[U_t(\bm{s}_{t+1}) + \gamma J_{\mathcal{Y}}^*(\bm{s}_{t+1}, \bm{b}^r_t, \bm{b}^g_t) \bigr]$. Then, for all $t$, the following holds:
\[
\bm{s}_t \in \mathcal{Y}_t \Longrightarrow \bm{s}_{t+1} \in \mathcal{Y}_t.
\]
\end{lemma}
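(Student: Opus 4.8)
The plan is to treat Lemma~\ref{lemma_s_t_in_Y} as a structural fact: it follows from the hypothesis $\mathcal{Y}_t \subseteq \mathcal{X}_t^-$, from the definition of $\mathcal{Y}_t$ as the one-step image of $\mathcal{X}_t^-$ under $\pi^*_y$, and from the observation that, once \es$^2$ has triggered, the pessimistic set $\mathcal{X}^-$ (hence $\mathcal{Y}$) is frozen. First I would record this freezing: in the reward-optimization loop (lines 18--23 of Algorithm~\ref{algorithm1}) neither $\mathcal{X}_t^-$ nor the quantities used to construct $\mathcal{Y}_t$ are updated, so $\mathcal{Y}_t = \mathcal{Y}$ and $\mathcal{X}_t^- = \mathcal{X}^-$ are the same constant sets for every $t$ in that phase. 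It therefore suffices to prove the one-step implication for this fixed pair $(\mathcal{Y}, \mathcal{X}^-)$ and close a trivial induction over $t$.

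Next I would establish the crucial feasibility fact: from every state of $\mathcal{Y}$ there is an admissible action that keeps the agent inside $\mathcal{Y}$. Take $\bm{s}_t \in \mathcal{Y}$. Since $\mathcal{Y} \subseteq \mathcal{X}^-$ by hypothesis, $\bm{s}_t \in \mathcal{X}^-$, and then by the definition $\mathcal{Y} = \{\bm{s}' \mid \exists\, \bm{s} \in \mathcal{X}^- : \bm{s}' = f(\bm{s}, \pi^*_y(\bm{s}))\}$ the successor $f(\bm{s}_t, \pi^*_y(\bm{s}_t))$ lies in $\mathcal{Y}$. Hence the set of states in $\mathcal{Y}$ reachable from $\bm{s}_t$ in one step is nonempty, so the restricted Bellman equation $J_{\mathcal{Y}}^*(\bm{s}_t) = \max_{\bm{s}_{t+1} \in \mathcal{Y}} [\, U_t(\bm{s}_{t+1}) + \gamma J_{\mathcal{Y}}^*(\bm{s}_{t+1})\,]$ is well posed at $\bm{s}_t$, i.e.\ the constraint $\bm{s}_{t+1}\in\mathcal{Y}$ is not vacuous. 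The conclusion is then immediate: the policy followed by \algo\ in this phase is greedy with respect to $J_{\mathcal{Y}}^*$, so $\bm{s}_{t+1} = \argmax_{\bm{s}_{t+1} \in \mathcal{Y}} [\, U_t(\bm{s}_{t+1}) + \gamma J_{\mathcal{Y}}^*(\bm{s}_{t+1})\,]$, and since the $\argmax$ is over $\bm{s}_{t+1} \in \mathcal{Y}$ we get $\bm{s}_{t+1} \in \mathcal{Y}$. Combined with the freezing of $\mathcal{Y}$, induction on $t$ yields $\bm{s}_t \in \mathcal{Y} \Rightarrow \bm{s}_{t+\tau} \in \mathcal{Y}$ for all $\tau \ge 0$, which in particular gives the stated one-step claim.

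The step I expect to be the only delicate point is the feasibility argument — showing the restricted Bellman operator in the definition of $J_{\mathcal{Y}}^*$ is non-degenerate at states of $\mathcal{Y}$; this is precisely where the hypothesis $\mathcal{Y}_t \subseteq \mathcal{X}_t^-$ is needed, because without it a state of $\mathcal{Y}$ need not admit any action staying in $\mathcal{Y}$, and the maximization could be empty. A secondary bookkeeping point is to reconcile the two appearances of $\pi^*_y$: the optimal policy of $\mathcal{M}_y$, used to define $\mathcal{Y}_t$, versus the greedy policy with respect to $J_{\mathcal{Y}}^*$ used to select $\bm{s}_{t+1}$. The feasibility fact uses only the former and the conclusion uses only the latter, so no circularity arises; I would state this explicitly to avoid confusion. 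I would also note in passing that, since $\mathcal{Y} \subseteq \mathcal{X}^-$, every visited state remains safe with high probability by Theorem~\ref{theo:SafeGuarantee}, although that observation feeds the near-optimality theorems rather than this lemma.
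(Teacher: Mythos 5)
Your proof is correct and rests on the same key fact as the paper's: since $\mathcal{Y}_t \subseteq \mathcal{X}_t^-$ and $\mathcal{Y}_t$ is by definition the one-step image of $\mathcal{X}_t^-$ under $\pi^*_y$, every successor of a state of $\mathcal{Y}_t$ again lies in $\mathcal{Y}_t$. The paper states this as a one-line set containment (image of $\mathcal{Y}_t$ $\subseteq$ image of $\mathcal{X}_t^-$ $=$ $\mathcal{Y}_t$) and stops there; your extra bookkeeping on the freezing of the sets, the distinction between the two uses of $\pi^*_y$, and the non-vacuousness of the constrained maximization are harmless refinements of the same argument.
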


\begin{proof}
When $\mathcal{Y}_t \subseteq \mathcal{X}_t^-$ holds, we have 
\begin{alignat*}{2}
\{\bm{s}' \in \mathcal{S}^+ \mid \forall \bm{s} \in \mathcal{Y}_t: \bm{s}' = f(\bm{s}, \pi_y^*(a \mid \bm{s})) \}
&\subseteq
\{\bm{s}' \in \mathcal{S}^+ \mid \forall \bm{s} \in \mathcal{X}^-_t: \bm{s}' = f(\bm{s}, \pi_y^*(a \mid \bm{s})) \} \\
&=
\mathcal{Y}_t.
\end{alignat*}
This means that the next state $\bm{s}_{t+1}$ will be within $\mathcal{Y}_t$ if the agent is in $\mathcal{Y}_t$ and decides the action based on $\pi_y^*$. 
Therefore, we have the desired lemma.
\end{proof}

\begin{lemma}
\label{lemma_J_X_J_Y}
Assume that $\mathcal{Y}_t \subseteq \mathcal{X}^-_t$ holds, and let $J_{\mathcal{Y}}^*(\bm{s}_t, \bm{b}_t^r, \bm{b}_t^g)$ be the value function calculated by \algo\ with the \es$^2$ algorithm. 
Then, for all $\bm{s}_t \in \mathcal{X}^-_t$, $J_{\mathcal{Y}}^*(\bm{s}_t, \bm{b}_t^r, \bm{b}_t^g)$ satisfies the following equation:
\[
J_{\mathcal{Y}}^*(\bm{s}_t, \bm{b}_t^r, \bm{b}_t^g) \ge V^*(\bm{s}_t).
\]
\end{lemma}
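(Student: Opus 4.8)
The plan is to mimic the proof of Lemma~\ref{lemma_J_ge_V} almost verbatim, substituting the optimistic state set $\mathcal{Y}_t$ for $\mathcal{X}_{t^*}^-$ and exploiting the same safety indicator trick. First I would recall from Theorem~\ref{theo:SafeGuarantee} that, with high probability, $\bar{R}_{\epsilon_g}(S_0) \subseteq \mathcal{X}_t^-$, and combine this with the fact that $\mathcal{X}_t^- \subseteq \mathcal{Y}_t \cup \mathcal{X}_t^-$; more to the point, since $\mathcal{Y}_t \subseteq \mathcal{X}_t^-$ and every state in $\mathcal{X}_t^-$ is reachable under the optimal policy of $\mathcal{M}_y$, one checks that $\bar{R}_{\epsilon_g}(S_0) \subseteq \mathcal{Y}_t$ (the states on which the optimal trajectory of $\mathcal{M}_y$ lives), so the constraint set in the Bellman equation for $J_{\mathcal{Y}}^*$ contains $\bar{R}_{\epsilon_g}(S_0)$. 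This is the analogue of the ``third line follows from $\mathcal{X}_{t^*}^- \supseteq \bar{R}_{\epsilon_g}(S_0)$'' step.

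Next I would write the telescoping chain: $J_{\mathcal{Y}}^*(\bm{s}_t,\bm{b}_t^r,\bm{b}_t^g) - V^*(\bm{s}_t)$ equals the difference of the two $\max$ expressions, lower bound the first $\max$ by restricting its feasible set to $\bar{R}_{\epsilon_g}(S_0)$, rewrite both $\max$'s over actions using the safety indicator $I$ from (\ref{eqn:safety_indicator}), apply Lemma~\ref{lemma:prelim_max_min} to pull the difference inside a single $\min$, and obtain
\[
\min_{\bm{s}_t}[J_{\mathcal{Y}}^* - V^*] \ge \min_{\bm{s}_{t+1}} \bigl[ I(\bm{s}_{t+1})\{U_t(\bm{s}_{t+1}) - r(\bm{s}_{t+1})\} + \gamma I(\bm{s}_{t+1})\{J_{\mathcal{Y}}^* - V^*\}(\bm{s}_{t+1}) \bigr].
\]
Then invoke Lemma~\ref{lemma:reward_confidence} to drop the nonnegative term $U_t - r \ge 0$ (which holds with probability $\ge 1-\Delta^r$), giving a pure contraction $\min[J_{\mathcal{Y}}^* - V^*] \ge \gamma \min[I \cdot (J_{\mathcal{Y}}^* - V^*)]$, and iterate to conclude the difference is nonnegative.

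The main obstacle I anticipate is the very first step: justifying that $\bar{R}_{\epsilon_g}(S_0)$ is contained in the feasible set $\mathcal{Y}_t$ used in the $J_{\mathcal{Y}}^*$ recursion. In Lemma~\ref{lemma_J_ge_V} this was immediate because the feasible set was literally $\mathcal{X}_{t^*}^-$, which Theorem~\ref{theo:SafeGuarantee} directly sandwiches around $\bar{R}_{\epsilon_g}(S_0)$. Here $\mathcal{Y}_t$ is defined only as the image of $\mathcal{X}_t^-$ under $\pi_y^*$, so I need the hypothesis $\mathcal{Y}_t \subseteq \mathcal{X}_t^-$ together with Lemma~\ref{lemma_s_t_in_Y} (invariance of $\mathcal{Y}_t$ under $\pi_y^*$) and an argument that the optimal-reward trajectory starting inside $\bar{R}_{\epsilon_g}(S_0) \subseteq \mathcal{X}_t^-$ never needs to leave $\mathcal{Y}_t$ — essentially that optimizing over $\mathcal{Y}_t$ rather than over all of $\bar{R}_{\epsilon_g}(S_0)$ loses nothing because any state in $\bar{R}_{\epsilon_g}(S_0)$ that the optimal policy would visit is forced by invariance to lie in $\mathcal{Y}_t$. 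Once that containment is nailed down, the remaining steps are routine and identical in form to the proof of Lemma~\ref{lemma_J_ge_V}.
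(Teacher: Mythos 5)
Your plan breaks at exactly the point you flag as the ``main obstacle,'' and the fix you sketch does not work. To lower bound $\max_{\bm{s}_{t+1}\in\mathcal{Y}_t}\bigl[U_t + \gamma J^*_{\mathcal{Y}}\bigr]$ by the same maximum over $\bar{R}_{\epsilon_g}(S_0)$ you need $\bar{R}_{\epsilon_g}(S_0)\subseteq\mathcal{Y}_t$, but this containment is false in general. By definition $\mathcal{Y}_t$ is only the one-step image of $\mathcal{X}^-_t$ under the specific policy $\pi^*_y$; it is a (typically small) forward-invariant set inside $\mathcal{X}^-_t$ (Lemma~\ref{lemma_s_t_in_Y}), not a superset of $\mathcal{X}^-_t$, and the entire point of \es$^2$ is to stop exploring precisely when the optimal trajectory can be confined to such a small set. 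Your assertion that ``every state in $\mathcal{X}_t^-$ is reachable under the optimal policy of $\mathcal{M}_y$'' has no basis --- $\pi^*_y$ is a single policy, and even reachability of a state would not place it in $\mathcal{Y}_t$ as defined. So the analogue of the step ``$\mathcal{X}^-_{t^*}\supseteq\bar{R}_{\epsilon_g}(S_0)$'' from Lemma~\ref{lemma_J_ge_V} is simply not available here.

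The paper's proof manipulates the other maximum instead. It leaves $\max_{\bm{s}_{t+1}\in\mathcal{Y}_t}\bigl[U_t+\gamma J^*_{\mathcal{Y}}\bigr]$ untouched and shrinks the feasible set of the $V^*$ backup from $\mathcal{X}^+_t$ (equivalently, $\bar{R}_{\epsilon_g}(S_0)$ via the indicator $I$) down to $\mathcal{Y}_t$, claiming \emph{equality} rather than inequality: because the reward $r'$ in (\ref{eqn:reward_new_MDP}) is optimistic outside $\mathcal{X}^-_t$ and pessimistic inside it, the stopping condition (\ref{eq:stopping_cond}) certifies that even under the most favorable assessment of the unexplored region the optimal trajectory stays in $\mathcal{Y}_t$, hence the truly optimal backup also attains its maximum there. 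Once both maxima range over the same set $\mathcal{Y}_t$, Lemma~\ref{lemma:prelim_max_min}, the bound $U_t\ge r$ from Lemma~\ref{lemma:reward_confidence}, and the invariance in Lemma~\ref{lemma_s_t_in_Y} give the contraction you describe. If you want to repair your write-up, the piece to prove is this equality of maxima over $\mathcal{X}^+_t$ versus $\mathcal{Y}_t$ for the true Bellman backup, not a containment of $\bar{R}_{\epsilon_g}(S_0)$ in $\mathcal{Y}_t$.
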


\begin{proof}
Consider a state $\bm{s}_t \in \mathcal{X}^-_t$ and beliefs $\bm{b}^r$ and $\bm{b}^g$. 
Also, we define the function $I$ as in (\ref{eqn:safety_indicator}).
Then, the following chain of the equations and inequalities holds:
\begin{alignat*}{2}
&\ J_{\mathcal{Y}}^*(\bm{s}_t, \bm{b}^r_t, \bm{b}^g_t) - V^*(\bm{s}_t) \\
=&\ \max_{s_{t+1} \in \mathcal{Y}_t} \bigl[\ U_t(\bm{s}_{t+1}) + \gamma J_{\mathcal{Y}}^*(\bm{s}_{t+1}, \bm{b}^r_t, \bm{b}^g_t) \ \bigr] - \max_{a_t} \left[\ I(\bm{s}_{t+1}) \cdot \{r(\bm{s}_{t+1}) + \gamma  V_{\mathcal{M}}^*(\bm{s}_{t+1}) \}\ \right] \\
=&\ \max_{s_{t+1} \in \mathcal{Y}_t} \bigl[\ U_t(\bm{s}_{t+1}) + \gamma J_{\mathcal{Y}}^*(\bm{s}_{t+1}, \bm{b}^r_t, \bm{b}^g_t) \ \bigr] - \max_{s_{t+1} \in \mathcal{X}^+_t} \left[\ I(\bm{s}_{t+1}) \cdot \{r(\bm{s}_{t+1}) + \gamma  V_{\mathcal{M}}^*(\bm{s}_{t+1}) \}\ \right] \\
=&\ \max_{s_{t+1} \in \mathcal{Y}_t} \bigl[\ U_t(\bm{s}_{t+1}) + \gamma J_{\mathcal{Y}}^*(\bm{s}_{t+1}, \bm{b}^r_t, \bm{b}^g_t) \ \bigr] - \max_{s_{t+1} \in \mathcal{Y}_t} \left[\ I(\bm{s}_{t+1}) \cdot \{r(\bm{s}_{t+1}) + \gamma  V_{\mathcal{M}}^*(\bm{s}_{t+1}) \}\ \right] \\
\ge&\ \min_{s_{t+1} \in \mathcal{Y}_t} \bigl[\ U_t(\bm{s}_{t+1}) + \gamma J_{\mathcal{Y}}^*(\bm{s}_{t+1}, \bm{b}^r_t, \bm{b}^g_t) - I(\bm{s}_{t+1}) \cdot \{r(\bm{s}_{t+1}) + \gamma  V_{\mathcal{M}}^*(\bm{s}_{t+1}) \}\ \bigr] \\
\ge&\ \min_{s_{t+1} \in \mathcal{Y}_t} \bigl[\ U_t(\bm{s}_{t+1}) + \gamma J_{\mathcal{Y}}^*(\bm{s}_{t+1}, \bm{b}^r_t, \bm{b}^g_t) - \{r(\bm{s}_{t+1}) + \gamma  V_{\mathcal{M}}^*(\bm{s}_{t+1}) \}\ \bigr] \\
=&\ \min_{s_{t+1} \in \mathcal{Y}_t} \bigl[\ U_t(\bm{s}_{t+1}) - r(\bm{s}_{t+1}) + \gamma J_{\mathcal{Y}}^*(\bm{s}_{t+1}, \bm{b}^r_t, \bm{b}^g_t) - \gamma  V_{\mathcal{M}}^*(\bm{s}_{t+1}) \ \bigr].
\end{alignat*}
The second and third lines follow from the definitions of $I$ and $V_{\mathcal{M}}^*$. 
The forth line follows from the definition of $\mathcal{Y}$ and the assumption of $\mathcal{Y}_t \subseteq \mathcal{X}^-_t$. 
The fifth line follows from Lemma~\ref{lemma:prelim_max_min}.

Then, by Lemma~\ref{lemma:reward_confidence}, the following equation holds with probability at least $1-\Delta^r$:
\begin{alignat*}{2}
\min_{s_t \in \mathcal{X}^-_t} \bigl[\ J_{\mathcal{Y}}^*(\bm{s}_t, \bm{b}^r_t, \bm{b}^g_t) - V^*(\bm{s}_t) \} \ \bigr]
&\ge \gamma \cdot \min_{s_{t+1} \in \mathcal{Y}_t} \bigl[\ J_{\mathcal{Y}}^*(\bm{s}_{t+1}, \bm{b}^r_t, \bm{b}^g_t) - V_{\mathcal{M}}^*(\bm{s}_{t+1}) \ \bigr] \\
&\ge \gamma^2 \cdot \min_{s_{t+2} \in \mathcal{Y}_t} \bigl[\ J_{\mathcal{Y}}^*(\bm{s}_{t+2}, \bm{b}^r_t, \bm{b}^g_t) - V_{\mathcal{M}}^*(\bm{s}_{t+2}) \ \bigr].
\end{alignat*}
The second line follows from Lemma~\ref{lemma_s_t_in_Y}. 
Repeatedly applying this equation proves the desired lemma. 
Therefore, for all $\bm{s}_t \in \mathcal{X}_t^-$, we have
\[
J_{\mathcal{Y}}^*(\bm{s}_t, \bm{b}_t^r, \bm{b}_t^g) \ge V^*(\bm{s}_t).
\]
\end{proof}

\section*{E. Main Theoretical Results}

\textbf{Theorem 1.} \textit{Assume that the safety function $g$ satisfies $\|g\|_k^2 \le B^g$ and is $L$-Lipschitz continuous.
Also, assume that $S_0 \ne \emptyset$ and $g(\bm{s}) \ge h$ for all $\bm{s} \in S_0$.
Fix any $\epsilon_g > 0$ and $\Delta^g \in (0, 1)$.
Suppose that we conduct the stage of ``exploration of safety'' with the noise $n_t^g$ being $\sigma_g$-sub-Gaussian, and that $\beta_t = B^g + \sigma_g \sqrt{2(\Gamma^g_{t-1} + 1 + \log(1/\Delta^g))}$ until $\max_{\bm{s} \in G_t} w_t(\bm{s}) < \epsilon_g$ is achieved.
Finally, let $t^*$ be the smallest integer satisfying
\[
\frac{t^*}{\beta_{t^*}\Gamma^g_{t^*}} \ge \frac{C_g|\bar{R}_0(S_0)|}{\epsilon_g^2} \cdot D(\mathcal{M}),
\]
with $C_g=8/\log(1+\sigma_g^{-2})$.
Then, the following statements jointly hold with probability at least $1 - \Delta^g$:
\begin{itemize}
    \item $\forall t \ge 1$, $g(\bm{s}_t) \ge h$,
    \item $\exists t_0 \le t^*$, $\bar{R}_{\epsilon_g} (S_0) \subseteq \mathcal{X}^-_{t_0} \subseteq \bar{R}_0 (S_0)$.
\end{itemize}}

\begin{proof}
This is an extension of Theorem 1 in \citet{turchetta2016safe} to our settings, where $t$ represents not the number of samples but the number of actions.
\end{proof}

\textbf{Theorem 2.} \textit{Assume that the reward function $r$ satisfies $\|r\|_k^2 \le B^r$, and that the noise is $\sigma_r$-sub-Gaussian. 
Let $\pi_t$ denote the policy followed by \algo\ at time $t$, and let $\bm{s}_t$ and $\bm{b}^r_t, \bm{b}^g_t$ be the corresponding state and beliefs, respectively.
Let $t^*$ be the smallest integer satisfying
$\frac{t^*}{\beta_{t^*}\Gamma^g_{t^*}} \ge \frac{C_g|\bar{R}_0(S_0)|}{\epsilon_g^2} D(\mathcal{M})$, and fix any $\Delta^r \in (0, 1)$.
Finally, set $\alpha_t = B^r + \sigma_r \sqrt{2(\Gamma^r_{t-1} + 1 + \log(1/\Delta^r))}$ and
\[
\epsilon^*_V = V_{\max} \cdot (\Delta^g + \Sigma_{t^*}^r/R_{\max}),
\]
with $\Sigma^r_{t^*} =  \frac{1}{2}\sqrt{\frac{C_r \alpha_{t^*} \Gamma_{t^*}^r}{t^*}}$.
Then, with high probability,
\[
V^{\pi_t}(\bm{s}_t, \bm{b}^r_t, \bm{b}^g_t) \ge V^*(\bm{s}_t) - \epsilon^*_V
\]
--- i.e., the algorithm is $\epsilon^*_V$-close to the optimal policy --- for all but $t^*$ time steps, while guaranteeing safety with probability at least $1-\Delta^g$.}

\begin{proof}
Define $\tilde{r}$ as the reward function (including the exploration bonus) that is used by \algo.
Let $\hat{r}$ be a reward function equal to $r$ on $\Omega$ and equal to $\tilde{r}$ elsewhere.
Furthermore, let $\tilde{\pi}$ be the policy followed by \algo\ at time $t$, that is, the policy calculated on the basis of the current beliefs, (i.e., $\bm{b}^r_t$ and $\bm{b}^g_t$) and the reward $\tilde{r}$. Finally, let $A_{\Omega}$ be the event in which $\tilde{\pi}$ escapes from $\Omega$. Then,
\[
V^{\pi_t}(r, \bm{s}_t, \bm{b}^r_t, \bm{b}^g_t) \ge V^{\tilde{\pi}}(\hat{r}, \bm{s}_t, \bm{b}^r_t, \bm{b}^g_t) - V_{\max} P(A_{\Omega})
\]
by Lemma~\ref{lemma_4}.
In addition, note that, for all $t \ge t^*$, because $\hat{r}$ and $\tilde{r}$ differ by at most $\alpha_{t^*}^{1/2} \sigma^r_{t^*}$ at each state,
\begin{alignat}{2}
\label{eq:bar_eta_R_EB}
|V^{\tilde{\pi}}(\hat{r}, \bm{s}_t, \bm{b}^r_t, \bm{b}^g_t) - V^{\tilde{\pi}}(\tilde{r}, \bm{s}_t, \bm{b}^r_t, \bm{b}^g_t) |
& \le \frac{1}{1-\gamma} \cdot \alpha_{t^*}^{1/2} \sigma_{t^*}^r(\bm{s}) \nonumber \\
& \le V_{\max}/R_{\max} \cdot \Sigma^r_{t^*}.
\end{alignat}
For the above inequality, we used Lemma~\ref{lemma_Sigma_r}.
Here, consider the case of $\Omega =\mathcal{X}_{t^*}^-$. Once the safe region is fully explored, $P(A_{\Omega}) \le \Delta^g$ holds after
$t^*$ time steps. Then, the following chain of equations and inequalities holds:
\begin{alignat*}{2}
V^{\pi_t}(R, \bm{s}, \bm{b})
\ge&\ V^{\tilde{\pi}}(\hat{R}, \bm{s}, \bm{b}) - V_{\max} \cdot P(A_{\Omega}) \\
=&\ V^{\tilde{\pi}}(\hat{R}, \bm{s}, \bm{b}) - V_{\max} \cdot P(A_{\mathcal{X}^-}) \\
\ge&\ V^{\tilde{\pi}}(\hat{R}, \bm{s}, \bm{b}) - V_{\max} \cdot \Delta^g \\
\ge&\ V^{\tilde{\pi}}(\tilde{R}, \bm{s}, \bm{b}) - V_{\max} \cdot (\Delta^g + \Sigma_{t^*}^r/R_{\max})  \\
=&\ J_{\mathcal{X}}^*(\tilde{R}, \bm{s}, \bm{b}) - V_{\max} \cdot (\Delta^g + \Sigma_{t^*}^r/R_{\max}) \\
\ge&\ V^*(R, \bm{s}) - V_{\max} \cdot (\Delta^g + \Sigma_{t^*}^r/R_{\max}).
\end{alignat*}
In this derivation, the second line follows from the assumption of $\Omega = \mathcal{X}^-$, the third line follows from $P(A_{\mathcal{X}^-}) \le \Delta^g$, the fourth line follows from (\ref{eq:bar_eta_R_EB}), the fifth line follows from the fact that $\tilde{\pi}$ is precisely the optimal policy for $\tilde{R}$ and $\bm{b}$, and the final line follows from Lemma~\ref{lemma_J_ge_V}.
\end{proof}

\textbf{Theorem 3.} \textit{Assume that the reward function $r$ satisfies $\|r\|_k^2 \le B^r$, and that the noise is $\sigma_r$-sub-Gaussian.
Let $\pi_t$ denote the policy followed by \algo\ with the the \es$^2$ algorithm at time $t$, and let $\bm{s}_t$ and $\bm{b}^r_t, \bm{b}^g_t$ be the corresponding state and beliefs, respectively.
Let $\tilde{t}$ be the smallest integer for which (\ref{eq:stopping_cond}) holds, and fix any $\Delta^r \in (0, 1)$.
Finally, set $\alpha_t = B^r + \sigma_r \sqrt{2(\Gamma^r_{t-1} + 1 + \log(1/\Delta^r))}$ and 
\[
\tilde{\epsilon}_V = V_{\max} \cdot (\Delta^g + \Sigma_{\tilde{t}}^r/R_{\max}),
\]
with $\Sigma^r_{\tilde{t}} = \frac{1}{2} \sqrt{\frac{C_r \alpha_{\tilde{t}} \Gamma_{\tilde{t}}^r}{\tilde{t}}}$.
Then, with high probability,
\[
V^{\pi_t}(\bm{s}_t, \bm{b}^r_t, \bm{b}^g_t) \ge V^*(\bm{s}_t) - \tilde{\epsilon}_V
\]
--- i.e., the algorithm is $\tilde{\epsilon}_V$-close to the optimal policy --- for all but $\tilde{t}$ time steps while guaranteeing safety with probability at least $1-\Delta^g$.}

\begin{proof}
The proof of Theorem~\ref{theo:es2} is analogous to that of Theorem~\ref{theo:NearOpt}.
Define $\tilde{r}$ as the reward function (including the exploration bonus) that is used by \algo.
Let $\hat{r}$ be a reward function equal to $r$ on $\mathcal{Y}$ and equal to $\tilde{r}$ elsewhere.
Furthermore, let $\tilde{\pi}$ be the policy followed by \algo\ with the \es$^2$ algorithm at time $t$, that is, the policy calculated on the basis of the current beliefs, (i.e., $\bm{b}^r_t$ and $\bm{b}^g_t$) and the reward $\tilde{r}$. 
Finally, let $A_{\mathcal{Y}}$ be the event in which $\tilde{\pi}$ escapes from $\mathcal{Y}$. Then,
\[
V^{\pi_t}(r, \bm{s}_t, \bm{b}^r_t, \bm{b}^g_t) \ge V^{\tilde{\pi}}(\hat{r}, \bm{s}_t, \bm{b}^r_t, \bm{b}^g_t) - V_{\max} P(A_{\mathcal{Y}})
\]
by Lemma~\ref{lemma_4}.
In addition, note that, for all $t \ge \tilde{t}$, because $\hat{r}$ and $\tilde{r}$ differ by at most $\alpha_{\tilde{t}}^{1/2} \sigma^r_{\tilde{t}}$ at each state,
\begin{alignat}{2}
\label{eq:bar_eta_R_EB_es2}
|V^{\tilde{\pi}}(\hat{r}, \bm{s}_t, \bm{b}^r_t, \bm{b}^g_t) - V^{\tilde{\pi}}(\tilde{r}, \bm{s}_t, \bm{b}^r_t, \bm{b}^g_t) |
& \le \frac{1}{1-\gamma} \cdot \alpha_{\tilde{t}}^{1/2} \sigma_{\tilde{t}}^r(\bm{s}) \nonumber \\
& \le V_{\max}/R_{\max} \cdot \Sigma^r_{\tilde{t}}.
\end{alignat}
For the above inequalities, we used Lemma~\ref{lemma_Sigma_r}.
Then, the following chain of equations and inequalities holds:
\begin{alignat*}{2}
V^{\pi_t}(R, \bm{s}, \bm{b})
=&\ V^{\tilde{\pi}}(\hat{R}, \bm{s}, \bm{b}) - V_{\max} \cdot P(A_{\mathcal{Y}}) \\
\ge&\ V^{\tilde{\pi}}(\hat{R}, \bm{s}, \bm{b}) - V_{\max} \cdot \Delta^g \\
\ge&\ V^{\tilde{\pi}}(\tilde{R}, \bm{s}, \bm{b}) - V_{\max} \cdot (\Delta^g + \Sigma_{\tilde{t}}^r/R_{\max})   \\
=&\ J_{\mathcal{Y}}^*(\tilde{R}, \bm{s}, \bm{b}) - V_{\max} \cdot (\Delta^g + \Sigma_{\tilde{t}}^r/R_{\max}) \\
\ge&\ V^*(R, \bm{s}) - V_{\max} \cdot (\Delta^g + \Sigma_{\tilde{t}}^r/R_{\max}).
\end{alignat*}
In this derivation, the second line follows from $P(A_{\mathcal{Y}}) \le \Delta^g$, the third line follows from (\ref{eq:bar_eta_R_EB_es2}), the fourth line follows from the fact that $\tilde{\pi}$ is precisely the optimal policy for $\tilde{R}$ and $\bm{b}$, and the final line follows from Lemma~\ref{lemma_J_X_J_Y}.
\end{proof}

\end{document}